\documentclass[]{style/iromlab}

\DeclareDocumentEnvironment{example}{}{\noindent\textbf{Running example:}\itshape}{}

\usepackage{ifthen}
\newboolean{include-notes}
\newboolean{include-new}
\newboolean{include-remove}
\setboolean{include-notes}{true}
\setboolean{include-new}{false}
\setboolean{include-remove}{false}

\usepackage[dvipsnames]{xcolor}
\usepackage[normalem]{ulem}
\providecommand{\justin}[1]{\ifthenelse{\boolean{include-notes}}{\textcolor{orange}{\textbf{Jaime:} #1}}{}}

\providecommand{\princeton}[1]{\ifthenelse{\boolean{include-notes}}{\textcolor{orange}{#1}}{}}

\providecommand{\p}[1]{\smallskip \noindent \textbf{{#1}.}}

\setlength{\abovecaptionskip}{.5em}

\providecommand{\mcal}[1]{\mathcal{#1}}
\providecommand{\mbb}[1]{\mathbb{#1}}

\providecommand{\mbf}[1]{\ensuremath{\mathbf{#1}}}
\providecommand{\Expect}{\ensuremath{\mathbb{E}}}

\providecommand{\norm}[1]{\ensuremath{\left\Vert #1 \right\Vert}}

\providecommand*{\acc}{\ensuremath{\mathrm{acc}}}
\providecommand*{\ex}{\ensuremath{\mathrm{e}}}

\usepackage{multicol}
\usepackage{amsmath, amsfonts, amssymb, amsthm}
\usepackage{bm}
\usepackage[inline]{enumitem}
\usepackage{mathtools}
\usepackage{graphicx}
\usepackage{longtable,tabularx}
\usepackage{placeins} 
\usepackage{float}
\usepackage{multirow}
\usepackage{bbm}
\usepackage{threeparttable}
\usepackage{balance}
\usepackage[ruled,algo2e]{algorithm2e}
\usepackage{algpseudocode}
\usepackage{adjustbox}
\usepackage{booktabs}
\usepackage{bm}
\usepackage{etoolbox}
\usepackage{microtype}
\usepackage{units}
\usepackage{xspace}
\usepackage{tcolorbox}
\usepackage{wrapfig}
\usepackage{caption}
\usepackage{thm-restate}

\usepackage{soul}
\setuldepth{foobar}

\newtheorem{remark}{Remark}

\newbool{extended}
\setbool{extended}{false}

\makeatletter
\providecommand{\longdash}[1][2em]{%
  \makebox[#1]{$\m@th\smash-\mkern-7mu\cleaders\hbox{$\mkern-2mu\smash-\mkern-2mu$}\hfill\mkern-7mu\smash-$}}
\makeatother
\providecommand{\omitskip}{\kern-\arraycolsep}

\providecommand*{\algname}{\ensuremath{\mbox{$C^3$}}\xspace}
\providecommand*{\bfalgname}{\ensuremath{\mbox{$\mbf{C}^3$}}\xspace}
\providecommand*{\tcf}{\mbox{Continuous-scale binary classification model}\xspace}
\providecommand*{\mcf}{\mbox{Multi-class classification model}\xspace}
\providecommand*{\ncf}{\mbox{Fixed-scale classification model}\xspace}
\providecommand*{\tc}{\mbox{CS-BC}\xspace}
\providecommand*{\mc}{\mbox{MCC}\xspace}
\providecommand*{\nc}{\mbox{FSC}\xspace}
\usepackage{lineno}

\usepackage{enumerate}
\usepackage{cleveref}

\author[1*]{Zhiting Mei}
\author[1]{Tenny Yin}
\author[1]{Micah Baker}
\author[1*]{Ola Shorinwa}
\author[1]{Anirudha Majumdar}

\affiliation[1]{Princeton University}
\contribution[*]{Equal contribution.}

\begin{document}

\title{
{\LARGE World Models That Know When They Don't Know: \\}
{\Large Controllable Video Generation with Calibrated Uncertainty
}}

\abstract{
    Recent advances in generative video models have led to significant breakthroughs in high-fidelity controllable video synthesis, conditioned on text, robot actions, etc.
Despite their impressive capabilities, video models often \emph{hallucinate} --- generating future video frames that are misaligned with physical reality --- which raises serious concerns in many downstream applications. Exacerbating this issue, video models also lack the ability to assess and express their confidence, impeding hallucination mitigation.
To address this challenge, 
we propose 
\bfalgname,
an uncertainty quantification (UQ) method for training \emph{continuous-scale} \emph{calibrated} \emph{controllable} video models for dense confidence estimation at the subpatch level, precisely localizing the uncertainty in each generated video frame. Our UQ method introduces three core innovations to empower video models to estimate their uncertainty. First, our method develops a novel framework that trains video models for \emph{correctness} and \emph{calibration} via strictly proper scoring rules.
Second, we estimate the video model's uncertainty in latent space, avoiding training instability and prohibitive training costs associated with pixel-space approaches.
Third, we map the dense latent-space uncertainty  to \emph{interpretable} pixel-level uncertainty in the RGB space for intuitive visualization, providing high-resolution uncertainty heatmaps that identify untrustworthy regions.
Through extensive experiments on large-scale robot datasets (Bridge and DROID) and real-world evaluations, we demonstrate that our method not only provides calibrated uncertainty estimates within the training distribution, but also enables effective out-of-distribution detection.

}

\keywords{
Controllable Video Models, Uncertainty Quantification, Trustworthy Video Synthesis. 
}

\website{
https://c-cubed-uq.github.io  %
}
{
c-cubed-uq.github.io   %
}

\code{
https://github.com/irom-princeton/c-cubed %
}
{
github.com/irom-princeton/c-cubed  %
}

\maketitle

\begin{figure}[th]
    \includegraphics[width=\linewidth]{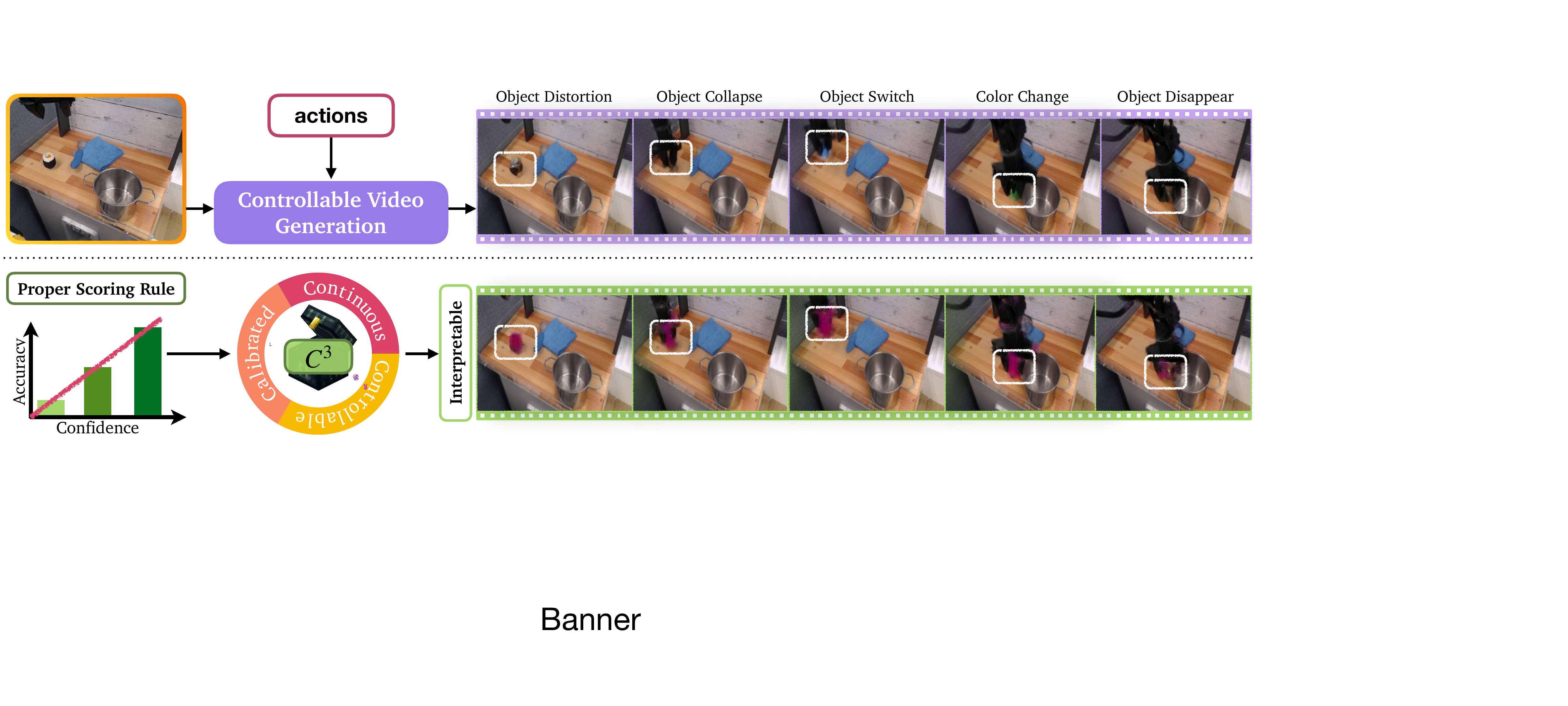}
    \caption{We present \algname, the first method for training video models that know when they don't know. Using proper scoring rules, \algname generates dense confidence predictions at the subpatch (channel) level that are physically interpretable and aligned with observations.
    }%
    \label{fig:banner}
\end{figure}

\section{Introduction}
\label{sec:intro}
State-of-the-art (SOTA) controllable generative video models~\cite{agarwal2025cosmos, wan2025wan, peng2025open, deepmind_veo3_techreport} are capable of synthesizing high-fidelity videos with rich visual content across diverse task settings.
Video models offer significant promise as high-fidelity embodied world models that address some of the most important challenges in robotics, although their applications to robotics are still in their infancy.
Concretely, video models enable photorealistic simulation of complex dynamical interactions (e.g., simulation of deformable bodies) with the potential for continuous learning by scaling the training data~(See~\cite{mei2026video} for a recent survey on applications of video models in robotics).
However, video models have a high propensity to \emph{hallucinate}, i.e., to generate future video frames that are physically inconsistent,
posing a significant hurdle in applications that demand trustworthy video generation.
Despite their tendency to hallucinate, video models lack the fundamental capacity to express their uncertainty, which hinders their trustworthiness. To the best of our knowledge, only one existing work attempts to quantify the uncertainty of video models~\cite{mei2025confident}. However, the resulting estimates only capture task-level uncertainty, failing to resolve the model's uncertainty spatially and temporally at the frame-level.
Given that robotics applications broadly require fine-grained frame-level decision-making, we believe that \emph{dense spatio-temporal} uncertainty quantification is critical for practical adoption of video models in robotics.

To address this critical challenge, we present \bfalgname, an uncertainty quantification (UQ) method for \emph{calibrated controllable} video synthesis, enabling subpatch-level confidence prediction  at any resolution in video generation accuracy, i.e., at \textit{continuous} scales (see~\Cref{fig:banner}).
Our work is centered on three core contributions. First, we introduce a novel framework for training %
video models for both \textit{accuracy} and \textit{calibration}, founded on 
proper scoring rules as loss functions, effectively teaching video models to quantify their uncertainty during the video generation process. We demonstrate that the resulting uncertainty estimates are \textit{well-calibrated} (i.e., neither underconfident nor overconfident) %
using benchmark robot learning datasets, e.g., the Bridge~\cite{walke2023bridgedata} and DROID~\cite{khazatsky2024droid} datasets. 

Second, we derive our UQ method directly in the \textit{latent space} of the video model. This key design choice circumvents the high computation %
costs associated with video generation in the (higher-dimensional) pixel space. Further, operating in the latent space streamlines applicability of our proposed method to a wide range of SOTA latent-space video model architectures~\cite{agarwal2025cosmos, wan2025wan, peng2025open}, without requiring specialized knowledge or adaptation for implementation.
Moreover, we compute \textit{dense} uncertainty estimates at the subpatch 
level for high-resolution UQ, with more fine-grained detail compared to patch-level UQ representations.

Third, we decode latent-space uncertainty into \emph{interpretable} pixel-space confidence estimates via temporal RGB heatmaps for intuitive visualization. We show that the uncertainty heatmaps are well-aligned with physical intuition, with areas of greater uncertainty associated with hallucinations --- highlighting untrustworthy areas of the video. Further, we show that the model's confidence estimates are negatively correlated with the error between the generated video and the ground-truth video, which is also consistent with intuition.

Finally, we demonstrate the effectiveness of \algname in detecting \textit{out-of-distribution} inputs (i.e., environment conditions and actions) in video generation through real-world experiments on a WidowX~250 robot. 
Particularly, we show that \algname provides calibrated uncertainty estimates, even when the quality of the generated video is significantly compromised given the distribution shift at test time.

\section{Related Work}
\label{sec:related_work}
\p {Video Generation Models}
Research breakthroughs in video generation have led to significant advances in the capabilities of generative video models in recent years. While early video generation models were limited to generating short-duration videos (only a few frames) with small temporal changes, SOTA models can generate seconds-long videos (hundreds of frames) with impressive photorealistic detail. Early methods \cite{jia2016dynamic, finn2016unsupervised, liu2017video} synthesize novel videos by applying local (pixel-level) transformations to input images, composing these transformations to capture more complex temporal changes. However, these methods are limited to localized scene updates generally centered around a target object in the scene video. Moreover, these methods lack sufficient expressivity to generate photorealistic videos. Subsequent work \cite{clark2019adversarial, vondrick2016generating, lee2018stochastic} employs generative adversarial networks  \cite{goodfellow2014generative}, 
while others~\cite{babaeizadeh2017stochastic, babaeizadeh2021fitvid, wu2021greedy} utilize variational inference with variational autoencoders \cite{kingma2013auto} for video generation but fail to sufficiently address limited expressiveness and mode collapse.
Addressing these limitations, SOTA methods \cite{wan2025wan, agarwal2025cosmos, peng2025open, kong2024hunyuanvideo} leverage diffusion-based or flow-based modeling \cite{ho2022video, ho2020denoising, lipman2022flow} for high-fidelity video generation.
More recent work~\cite{agarwal2025cosmos, quevedo2025worldgymworldmodelenvironment, guo2025ctrlworldcontrollablegenerativeworld} builds upon these methods to enable video generation conditioned on robot actions.

\p{Uncertainty Quantification of Video Models}
Uncertainty quantification of large language models (LLMs) has been been extensively studied (see \cite{shorinwa2025survey} for a review of UQ methods for LLMs); however, only a few papers have explored uncertainty quantification of image or video generation models. Like traditional methods for UQ of deep neural networks \cite{abdar2021review}, prior work on UQ of generative image models applies Bayesian methods to image diffusion models to estimate epistemic and aleatoric uncertainty using a variance-decomposition-based approach \cite{chan2024estimating}. Another approach \cite{berry2024shedding} takes an ensemble-based UQ perspective, estimating the uncertainty of image diffusion models using the mutual information over the distribution of the weights of an ensemble of the diffusion models. Some other methods \cite{franchi2025towards} extract language descriptions from the generated images, facilitating uncertainty quantification of image generation models using established UQ methods for language models. Extending these methods to uncertainty quantification of video generation models is not trivial, given the spatio-temporality of videos and the significant computation costs of classical UQ methods. One existing approach \cite{mei2025confident} directly considers uncertainty quantification of video models. However, this method only provides a composite confidence estimate for each generated video and thus fails to provide more informative \textit{dense} confidence estimates at the frame-level or pixel-level. Robotics applications generally require fine-grained frame-level decision-making, which benefits from dense spatio-temporal uncertainty quantification. Our work explores this research direction specifically.

\section{Uncertainty Quantification of Video Models}
\label{sec:method}

\begin{figure*}[t]
    \centering
    \includegraphics[width=\linewidth]{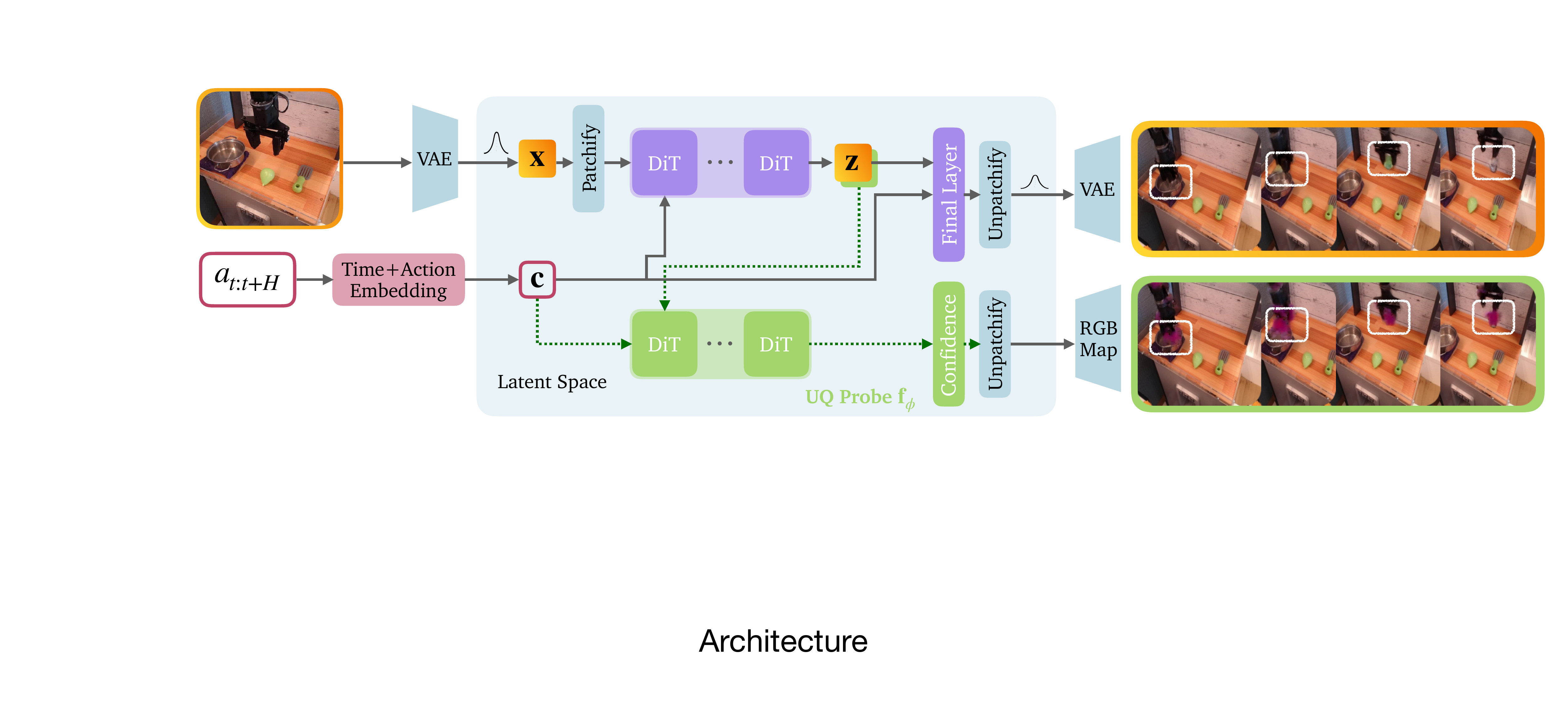}
    \caption{\textbf{Model Architecture.} \algname enables simultaneous video generation and uncertainty quantification, quantifying the model's confidence in its accuracy using a UQ probe acting on the video latents. High uncertainty regions (red) show hallucinations.}
    \label{fig:architecture}
\end{figure*}

For simplicity, we limit the discussion of our UQ method to video diffusion or flow-based models, given their SOTA performance.
We provide a brief review of these models in Appendix~\ref{app:diffusion}.
However, we note that our proposed method readily applies to other video model architectures, such as GAN-based/RNN-based video models, with relatively straightforward adaptations. 
We adopt the latent diffusion transformer (DiT) architecture, described by:
\begin{align}
   \mathbf{x} = \text{Encode (\mbf{v}, \mbf{g})}, \quad
   \mathbf{\hat x} \sim \text{DiT}(\mathbf{x}, \mathbf {a}), \quad
   \hat{\mbf{o}} = \text{Decode} (\mathbf{\hat x}), \label{eq:dit}
\end{align}
where $\mbf{v}$ denotes the input video frames, $\mbf{g}$ denotes other conditioning inputs (e.g., text or action),  $\mathbf{x}\in \mathcal{U}$ denotes the encoded conditioning inputs in the latent video space $\mathcal{U}$, and $\mathbf a\in\mathcal A$ represents the action sequence carried out by the agent.

\subsection{Confidence Prediction}
\label{sec:confidence_prediction}

We introduce \algname, a method for uncertainty quantification of controllable (action-conditioned) video generation models that provides dense estimates of the model's confidence in the accuracy of each video frame at the subpatch level, conditioned on input actions.
Concretely, we train the video model 
$\mathcal V_\theta: \mathcal U\times \mathcal{A} \rightarrow \mathcal U\times \mathcal U$
to generate accurate video frames with corresponding dense confidence estimates: %
\begin{equation}
    \mathbf{\hat x}, \mbf{\hat q} \sim \mathcal V_{\theta}(\mathbf{v}, \mathbf{g}, \mathbf {a}),
\end{equation}
where $\mathbf{\hat q}\in \mathcal{U}$ is the confidence prediction. Each element in $\mathbf{\hat q}$ corresponds to the model's confidence in the accuracy of the associated subpatch. %

Traditional UQ methods, such as Monte Carlo-based methods or ensemble-based techniques, generally require multiple forward passes or multiple instances of the model to estimate uncertainty. However, video models typically have billions of parameters, making these methods too computationally expensive and generally intractable. 
To overcome these challenges, we take a novel approach to UQ of video models. First, we pose UQ as a classification problem over the accuracy of the generated video, seeking to assess the model's confidence in the video accuracy.
This key choice eliminates inductive biases associated with restricting the predicted accuracy to a specific class of probability distributions, which could hinder calibration. %

Given the high computational cost of video generation, we design a transformer-based UQ probe ${\mbf{f}_{\phi}: \mcal{U} \rightarrow \mcal{U}}$ to estimate the video model's confidence directly in \emph{latent space}.
We integrate $\mbf{f}_{\phi}$ within the video generation framework for simultaneous video generation and uncertainty quantification during training and inference, as summarized in~\Cref{fig:architecture}. However, we note that both components can also be trained independently.
For more efficient training, we generate the videos in latent space using a vector-quantized variational autoencoder (VQ-VAE) with spatio-temporal convolution and attention layers to map input video frames to a lower-dimensional latent space. Specifically, we roll-out the forward and reverse diffusion processes in latent space, before mapping the generated latent video to the pixel space using a decoder. In this work, we utilize pre-trained VQ-VAEs \cite{blattmann2023align, kondratyuk2023videopoet, agarwal2025cosmos}, self-supervised with a reconstruction objective to compress RGB videos into a compact latent space.
Further, we leverage diffusion forcing \cite{chen2024diffusion} for independent per-sample noise schedules, in line with prior work. 

For action-conditioned video generation, we compute action embeddings from input actions using a multi-layer perceptron and sum the resulting action embeddings with the timestep embedding computed using frequency-space encodings. We feed the resulting embeddings $\mbf{c}$ to the DiT. 
Given the input video frame and action, we extract the internal features $\mbf{z}$ of the DiT from the penultimate layer, which is passed into $\mbf{f}_{\phi}$, alongside the action and timestep embedding, to predict the subpatch (channel-wise) confidence ${\hat{\mbf{q}}}$. This confidence represents the probability that each subpatch of the generated latent video is accurate with respect to an element-wise boolean function $\bm{\acc}$, which we elaborate in~\Cref{sec:architecture}.

\subsection{Model Architectures}
\label{sec:architecture}
The definition of the accuracy function $\bm{\acc}$ induces different model architectures.
To demonstrate our method's amenability to different realizations, we consider three possible architectural instantiations of \algname. %
We define $\bm{\acc}$ in terms of a distance function $\mbf{d}$. In this work, we use the $L_1$ loss: 
\begin{equation}
    \label{eq:latent_video_distance}
    \mbf{d}(\mbf{\hat x}, \mbf{x}^{\star}) \coloneqq \lvert \mbf{\hat x} - \mbf{x}^{\star} \rvert,
\end{equation}
although other distance metrics can also be used, e.g., the squared deviation.
However, we emphasize that in our setting, all $p$-norms simplify to the $L_1$ loss in~\Cref{eq:latent_video_distance} since $\mbf{d}$ is applied element-wise, making them equivalent. We use the $L_1$ loss for simplicity.
Given $\mbf{d}$, the accuracy function maps the generated videos to a binary-valued output space of the same dimensions as $\mcal{U}$, where each element is in $\{0, 1\}$, given by the boolean operator:
\begin{equation}
    \label{eq:latent_video_accuracy}
    \bm{\acc}(\mbf{\hat x}, \mbf{x}^{\star}, \varepsilon) \coloneqq \mbf{d}(\mbf{\hat x}, \mbf{x}^{\star}) \leq \varepsilon,
\end{equation}
based on the errors between the ground-truth and generated videos, given a threshold $\varepsilon$.
The technique used in specifying $\varepsilon$ induces a range of model architectures, namely: (i) fixed-scale classification models, (ii) multi-class classification models, and (iii) continuous-scale classification models, which we describe in the following subsections. We train all variants of our model with proper scoring rules for calibration. Appendix~\ref{app:proper_scoring_rule} provides a brief overview of proper scoring rules.

\p{\ncf  (\nc)}
The \nc model predicts the accuracy of generated videos at a fixed accuracy resolution during training and inference, and thus requires the specification of a single error threshold $\varepsilon$.
By requiring only a single value of $\varepsilon$, \nc models are typically faster to train than other models at the cost of generality to a range of resolutions.
In practice, we select an appropriate value of $\varepsilon$ for the task domain.
As is standard in classification problems, we train $\mbf{f}_{\phi}$ to predict the logits and use the sigmoid function $\sigma$ to map these values to valid confidence estimates $\mathbf{\hat q}$ that lie within $[0, 1]$: %
\begin{equation}
    \mathbf{\hat q} = \sigma(\mathbf f_\phi(\mathbf z, \mathbf c)),
    \label{eq:sigmoid}
\end{equation}
where $\mathbf f_\phi$ is the confidence probe, $\mathbf z$ is the latent internal feature, and $\mathbf c$ is the latent action/time embedding. 
We optimize the parameters of $\mathbf f_\phi$ with the Brier score loss function, given by: ${\text{BS} = \Expect_{y}(\hat{q} - y)^2}$ with ground-truth accuracy $y$ and predicted confidence $\hat{q}$ for the prediction over a single subpatch. We sum over all subpatches in computing the loss for each video. With a slight overload in notation, ${y,\hat{q}}$ denote each component in ${\mbf{y}, \mbf{\hat{q}}}$, respectively.

\p{\mcf (\mc)} 
Inspired by the effectiveness of classical UQ methods for LLMs~\cite{shorinwa2025survey}, we pose video model UQ as a multi-class classification problem by discretizing the output space of predictions into confidence bins, with the corresponding $\bm{\acc}$ defined by:
\begin{equation}
    \label{eq:latent_video_accuracy_bin}
    \bm{\acc}(\mbf{\hat x}, \mbf{x}^{\star}, O_i) \coloneqq \varepsilon_i \leq\mbf{d}(\mbf{\hat x}, \mbf{x}^{\star}) < \varepsilon^i,
\end{equation}
where $O_i$ represents the $i$-th bin with lower-bound  $\varepsilon_i$ and upper-bound $\varepsilon^i$.
For each subpatch of the generated video, the \mc model predicts its confidence that the corresponding subpatch is accurate with respect to the accuracy thresholds associated with the bin.
Like the \nc model, the \mc model predicts the logits for each bin, which is subsequently mapped to valid confidence (probability) values $\mathbf{\hat q}$ using the softmax.
We optimize $\mathbf f_\phi$ with the cross-entropy loss function, which is a strictly proper scoring rule given by: ${\text{CE} = \Expect_{y}\big[-\sum_{k} y_k \log q_k\big]}$, with ground-truth value $y$ and predicted confidence $q_k$ for the $k$-th bin.

\p{\tcf  (\tc)}
To demonstrate the expressiveness of our approach, we train a continuous-scale model for any-resolution confidence prediction, conditioning $\mathbf f_\phi$ on an accuracy threshold $\varepsilon$ specified at inference.
During training, we uniformly sample a set of $\varepsilon_v$ at each iteration to ensure sufficient coverage of the thresholds. ($\varepsilon_v$ is the linearly scaled version of $\varepsilon$ specifying the deviation between predicted and ground-truth deviations. See Appendix~\ref{app:diffusion} for the full derivation.)
In practice, for faster training, we discretize the space of $\varepsilon$ using an adaptive hierarchical technique, by first dividing $\varepsilon$ into uniform bins and further adaptively subdividing bins for higher resolution. When domain knowledge is available, more informed discretization or sampling schemes can be used for more efficient training. 
Like the preceding models, the \tc model predicts logits for each subpatch, which is mapped to confidence estimates using the sigmoid function:
\begin{equation}
    \mathbf{\hat q} = \sigma(\mathbf f_\phi(\mathbf z, \mathbf c, \varepsilon)),
    \label{eq:conf_tc}
\end{equation}
taking the conditioning input threshold $\varepsilon$.
In our experiments, we explore training the models with both Brier score and binary cross entropy, given by: ${\text{BCE} = \Expect_{y}\big[y\log q + (1 - y) \log(1 - q)\big]}$,
which are proper scoring rules. %

\p{End-to-end training}
We train the video generation and uncertainty quantification modules independently end-to-end using the loss function: 
\begin{equation}
    \label{eq:model_loss}
    \mcal{L}_{\theta, \phi} = \mcal{L}_{\theta} + \mcal{L}_{\phi},
\end{equation}
where $\theta$ represents the parameters of the DiTs for video generation and $\phi$ represents the parameters of the UQ probe. 
We apply a stop-gradient operator between the video generation DiTs and the UQ probe. 
In our ablations in Appendix~\ref{app:ablations}, we explore the effects of backpropagating gradients from the UQ probe $\mathbf f_\phi$ to the video generation DiTs.
To optimize the loss function, we use stochastic gradient descent with a cosine-annealing decay schedule applied to the learning rate. 

\begin{restatable}[Uncertainty Decomposition]{proposition}{propdecomp}
\label{prop:calibration}
    Given the input actions and video frames, the predicted confidence $\hat{\mbf{q}}$ provides a calibrated measure of uncertainty of the video diffusion model in the generated video, provided that $\phi$ converges to an optimal solution.
\end{restatable}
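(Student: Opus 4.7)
The plan is to reduce the calibration claim to the Bayes-risk consistency of strictly proper scoring rules. Recall that a scoring rule $S(q, y)$ is strictly proper when, for every distribution $P$ over $y$, the expected score $\Expect_{y \sim P}[S(q, y)]$ is uniquely minimized at $q = P$. The Brier score, cross entropy, and binary cross entropy used in \Cref{sec:architecture} all enjoy this property (see Appendix~\ref{app:proper_scoring_rule}). The target random variable is the per-subpatch accuracy indicator $Y = \bm{\acc}(\mbf{\hat x}, \mbf{x}^{\star}, \varepsilon)$, which is Bernoulli (or categorical in the \mc case) conditional on the features $(\mbf{z}, \mbf{c})$ and, for the \tc variant, also on $\varepsilon$.

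First I would rewrite the population loss as the nested expectation
\begin{equation}
\mcal{L}_\phi = \Expect_{(\mbf{z}, \mbf{c}, \varepsilon)}\Big[\Expect_{Y \mid \mbf{z}, \mbf{c}, \varepsilon}\bigl[S\bigl(\mbf{f}_\phi(\mbf{z}, \mbf{c}, \varepsilon), Y\bigr)\bigr]\Big],
\end{equation}
via the tower property. Because the outer expectation weights the inner one nonnegatively and the probe output depends only on the features, any minimizer of $\mcal{L}_\phi$ must minimize the inner expectation pointwise almost surely. Strict propriety then forces $\mbf{f}_\phi^\star(\mbf{z}, \mbf{c}, \varepsilon) = \Pr(Y = 1 \mid \mbf{z}, \mbf{c}, \varepsilon)$, and a second application of the tower property yields $\Expect[Y \mid \hat{q} = p] = p$, which is the definition of a calibrated probabilistic predictor.

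Next I would unpack the ``decomposition'' alluded to in the proposition's title. The conditional law of $Y$ given $(\mbf{z}, \mbf{c}, \varepsilon)$ absorbs both the aleatoric randomness of the denoising trajectory for a fixed conditioning and any epistemic mismatch between the learned denoiser and the true data distribution that manifests in its induced sampling measure. Hence $\hat{\mbf{q}}^\star$ is precisely the combined predictive probability of correctness under the current video model's sampling law, with the subpatch index supplying a spatial decomposition and $\varepsilon$ (in the \tc case) exposing the tolerance-dependence of the predictor.

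The main obstacle is not the scoring-rule argument itself but the realizability and optimization assumptions packaged into ``$\phi$ converges to an optimal solution.'' To make the statement rigorous I would either (i) assume that the probe family $\{\mbf{f}_\phi\}$ is expressive enough to realize the true conditional probability, so the global minimizer coincides with the calibrated predictor, or (ii) weaken the conclusion to a best-in-class calibration guarantee via the standard excess-risk bound that controls a Bregman divergence from the true conditional by $\mcal{L}_\phi(\phi^\star) - \mcal{L}_\phi(\phi_0)$. The stop-gradient between the DiT and the probe in \Cref{eq:model_loss} is crucial: it freezes the joint distribution of $(\mbf{z}, \mbf{c})$ with respect to $\phi$, so the population problem is well-posed and Bayes-risk consistency applies without the circularity of the probe influencing its own training covariates.
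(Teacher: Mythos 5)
Your proposal follows the same core argument as the paper's proof: both invoke the strict-propriety property that the population minimizer of the expected score is the true conditional probability of the accuracy indicator, and then read off calibration from $\hat{\mbf{q}} = p(\mbf{y})$ at the optimum. You additionally spell out the tower-property reduction, flag the realizability assumption, and note the role of the stop-gradient, all of which the paper's terser proof leaves implicit by simply asserting convergence to the unique minimizer of the proper scoring rule.
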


We provide the proof in Appendix~\ref{app:proofs}.

\subsection{Decoding Latent Confidence Predictions}
Like the latent video $\mbf{x}$, the predicted confidence $\hat{\mbf{q}}$ is not immediately interpretable; hence, we decode the predicted confidence from the latent space to the pixel (RGB) space for better visualization. However, simply utilizing pre-trained video tokenizers as decoders would generally yield equally uninterpretable outputs, since these pre-trained decoders are trained to map RGB embeddings from the latent space to the pixel space. 
As a result, we define a color map in latent space by encoding monochromatic RGB video frames into the latent space. For simplicity, we construct a latent color map from red-only, green-only, and blue-only video frames; however, higher-resolution color maps can also be constructed. We map the confidence estimates to latent RGB video frames by interpolating between the video frames in the latent color map. 
Subsequently, we map the latent RGB video frames for the predicted confidence to pixel space using the same tokenizer used in decoding the latent video $\mbf{x}$.
In~\Cref{sec:evaluation}, we demonstrate that the resulting uncertainty heatmaps are well-aligned with intuition, identifying areas of the generated video that contain hallucinations.

\section{Experiments}
\label{sec:evaluation}
We evaluate the performance of \algname in uncertainty quantification of action-conditioned video models, specifically examining its calibration, interpretability, and out-of-distribution detection capabilities via the following questions:
(i) Is \algname \textit{underconfident}, \textit{calibrated}, or \textit{overconfident}?
(ii) Are \algname's uncertainty estimates interpretable?
(iii) Can \algname detect \textit{OOD} inputs at inference?
We provide additional experiments and ablations in the Appendix.

\p{Datasets}
We conduct experiments on the Bridge dataset~\cite{walke2023bridgedata}, a standard benchmark dataset for robotics-oriented video models. The Bridge dataset consists of real-world robot trajectories collected in 24 environments on a WidowX 250 robot arm with a fixed RGB camera, capturing broad environment variations across different robot manipulation tasks. In addition, we present additional results using the DROID dataset~\cite{khazatsky2024droid} in Appendix~\ref{app:droid}. The DROID dataset consists of trajectories collected on a Panda robot arm with a Robotiq gripper, featuring greater coverage of tasks with multi-view camera observations collected using a wrist camera and two scene cameras. 

\p{Metrics}
In order to empirically evaluate the calibration of \algname, we use metrics that capture deviation from perfect calibration, including expected calibration error (ECE) and maximum calibration error (MCE), see \Cref{eq:proof_uncertainty_calibration}.

\subsection{Are \texorpdfstring{\algname}{C3}'s uncertainty estimates calibrated?}
\label{sec:evaluation_calibration}
We examine the calibration of the uncertainty estimates computed by \algname in
dynamics prediction with controllable video models, specifically in robot manipulation tasks which constitutes a major application domain for these models.
We train the three model architectures: \tc, \mc, and \nc (discussed in  \Cref{sec:architecture}) on the train dataset split of the Bridge dataset and evaluate the trained models on the test split. To assess calibration, first, we generate videos and their corresponding dense confidence predictions conditioned on the initial video frame and the entire action trajectory for each sample in the test dataset. Subsequently, we compute the ECE and the MCE for each model, measuring the deviations from perfect calibration. 
We provide additional details on the evaluation procedure in Appendix~\ref{app:eval_procedure}.

\begin{wrapfigure}[17]{r}{0.47\textwidth}
    \vspace{-6ex}
    \centering
    \includegraphics[width=\linewidth]{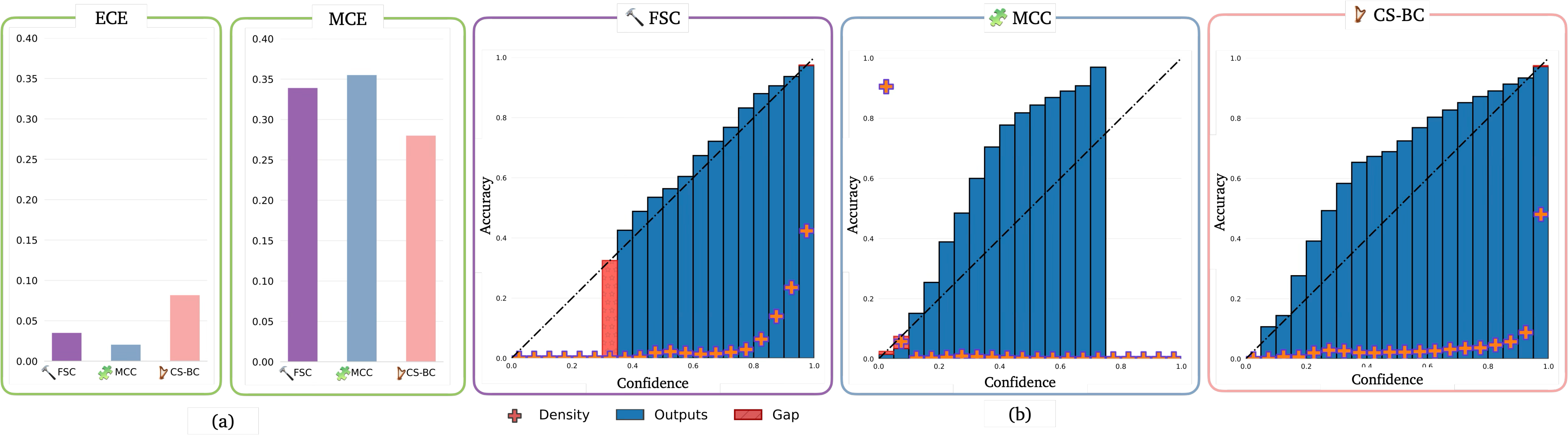}
    \caption{\textbf{Average calibration error.} All three architectures have low ECE and MCE.
    }
    \label{fig:average_calibration}
\end{wrapfigure}
\p{Calibration Errors}
In \Cref{fig:average_calibration}, we show the average ECE and MCE of each model across all the test videos. 
For the continuous-scale model \tc, we compute the average errors across ten equally-spaced error thresholds 
$\varepsilon_v$, spanning the observable latent-space prediction error domain as visualized in~\Cref{fig:latent_space_error}.
In contrast, the \nc model does not take in an accuracy scale for conditioning; as a result, we compute the ECE and MCE at the fixed-scale used in training the model. (For more informative evaluation and more comparative results, we select the fixed scale to lie within the range of $\varepsilon_v$ used by the \tc model.)  Similarly, we compute the ECE and MCE for all classes (accuracy scales) in the \mc model and report the average values in~\Cref{fig:average_calibration}. 
The results indicate that \algname \mbox{produces} \textit{well-calibrated} uncertainty estimates across all models. 
Although all models achieve relatively the same MCE, their performance on the ECE differs. This small difference can be explained by the tradeoff between continuous-scale calibration and fixed-scale calibration. The improved expressiveness and flexibility of continuous-scale calibration might come at the cost of a marginal reduction in calibration at a single (specific) scale. The converse holds for the fixed-scale model, which is less expressive and flexible. The multi-class classification variant lies between these extremes on the tradeoff curve. We emphasize that the superior calibration performance of \algname arises from the use of proper scoring rules.

\begin{figure}[t]
    \centering
    \includegraphics[width=0.85\linewidth]{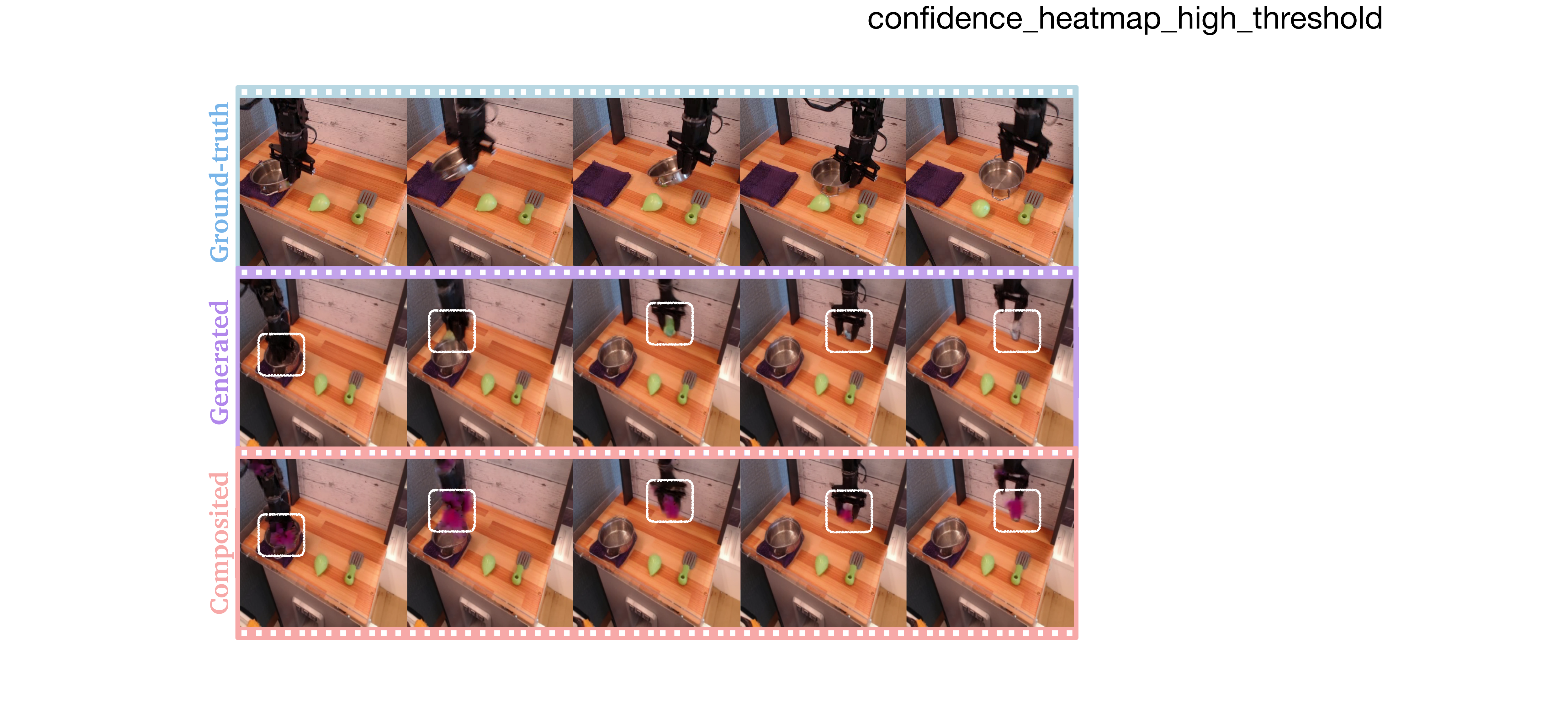}
    \caption{\textbf{Interpretability of \algname's confidence estimates.} 
    While attempting to pick up the pot, the video model hallucinates a green object appearing within the robot's gripper in a way that violates the law of physics. The object further undergoes non-casual deformation and changes to its color. \algname localizes these hallucinations, highlighting that the video model is highly uncertain about these hallucinations.
    }
    \label{fig:confidence_heatmap}
\end{figure}

\begin{figure}[th]
    \centering
    \includegraphics[width=0.9\linewidth]{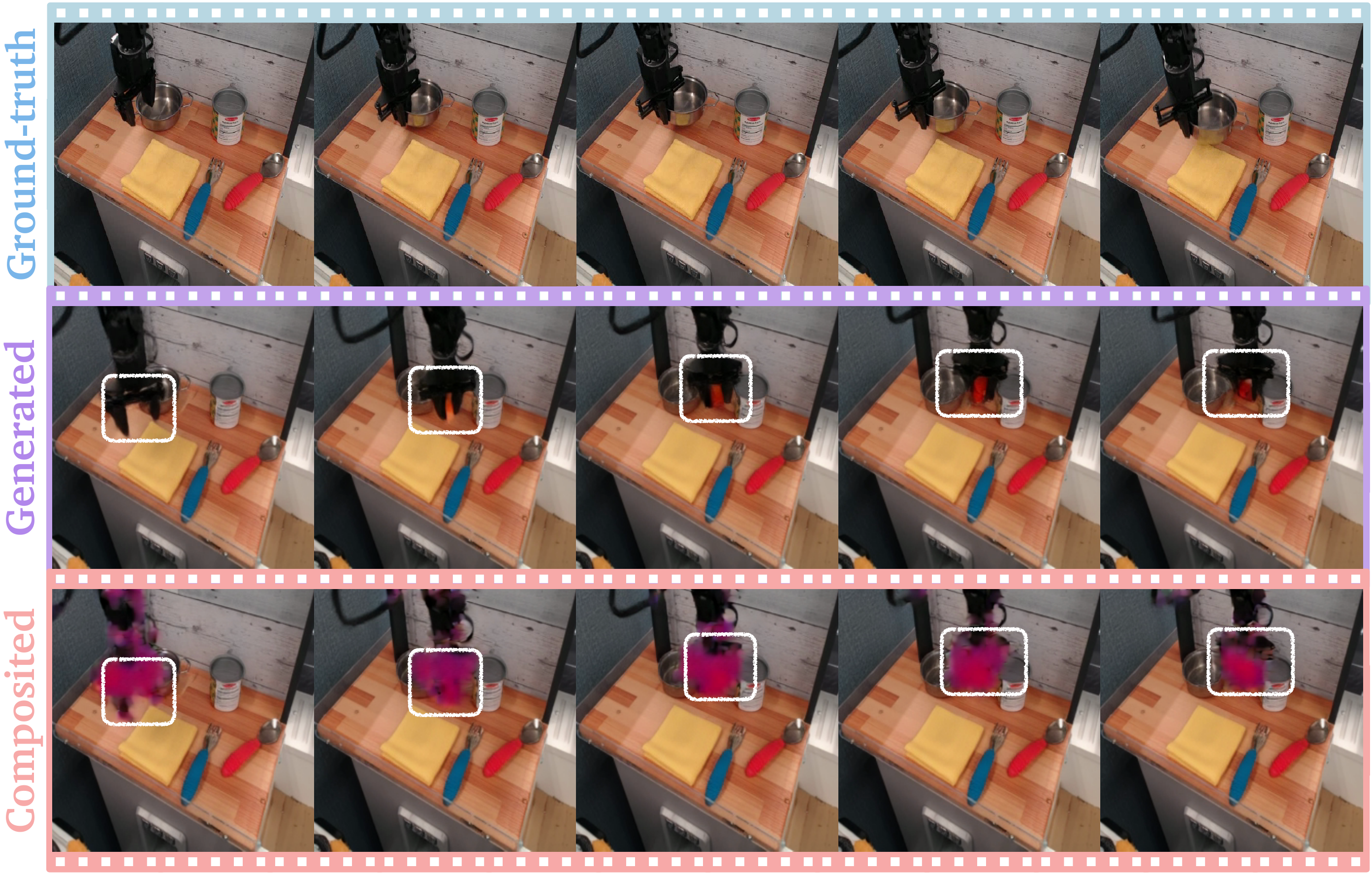}
    \caption{\textbf{Capturing Hallucinations.} The video model hallucinates the robot picking up a carrot, which appears out of nowhere. \algname is able to capture the high uncertainty corresponding to these hallucinations.}
    \label{fig:conf_heatmap_artifacts}
\end{figure}

\begin{figure}
\centering
\begin{minipage}[t]{0.49\linewidth}
    \centering
    \includegraphics[width=\linewidth]{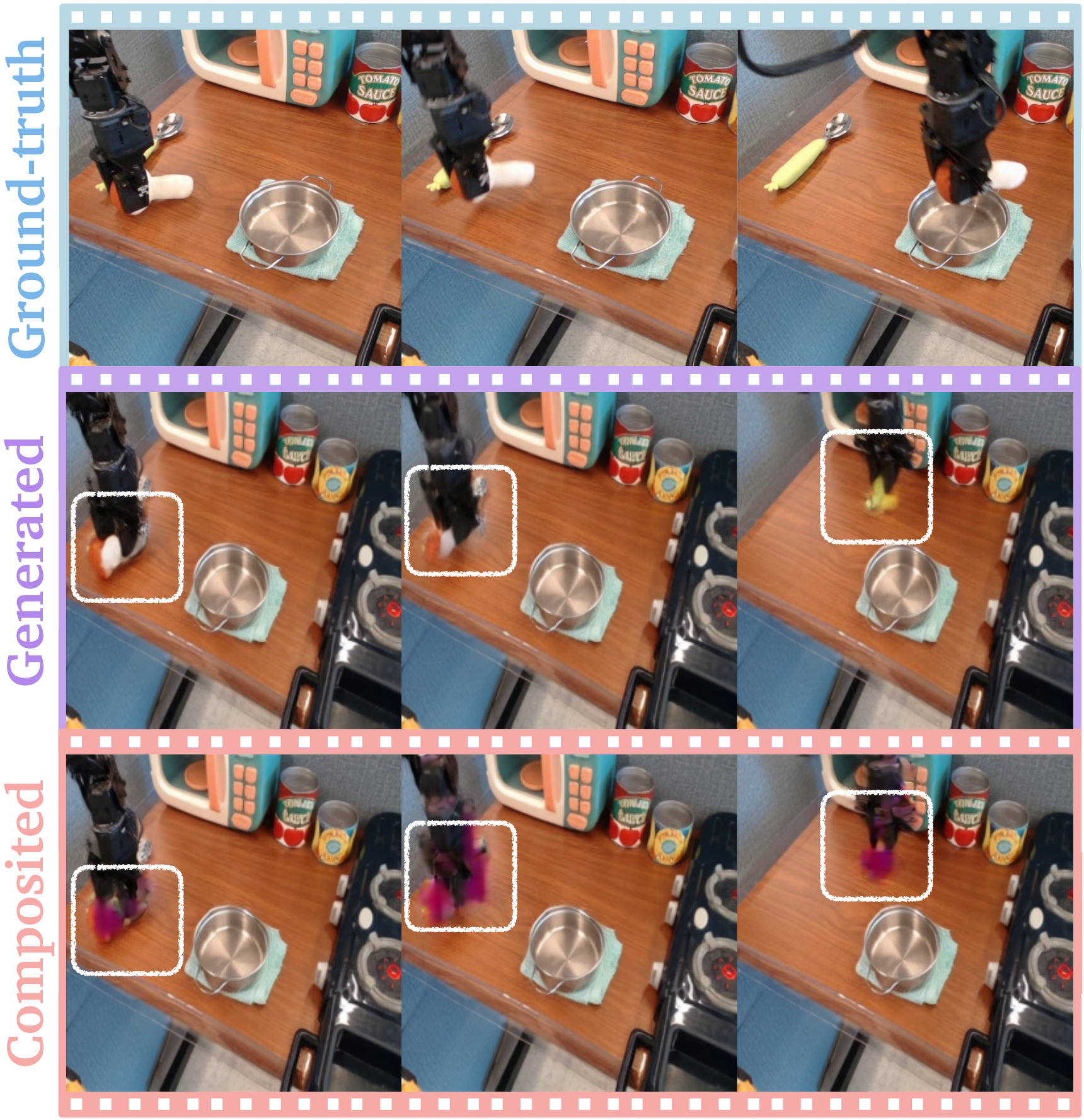}
    \caption{\textbf{Uncertainty in Object Interactions.} The plushy toy deforms in non-casual ways that violate physical laws. \algname shows that the video model is highly uncertain about these interaction dynamics.}
    \label{fig:red_around_grasped_object}
\end{minipage}\hfill
\begin{minipage}[t]{0.49\linewidth}
    \centering
    \includegraphics[width=\linewidth]{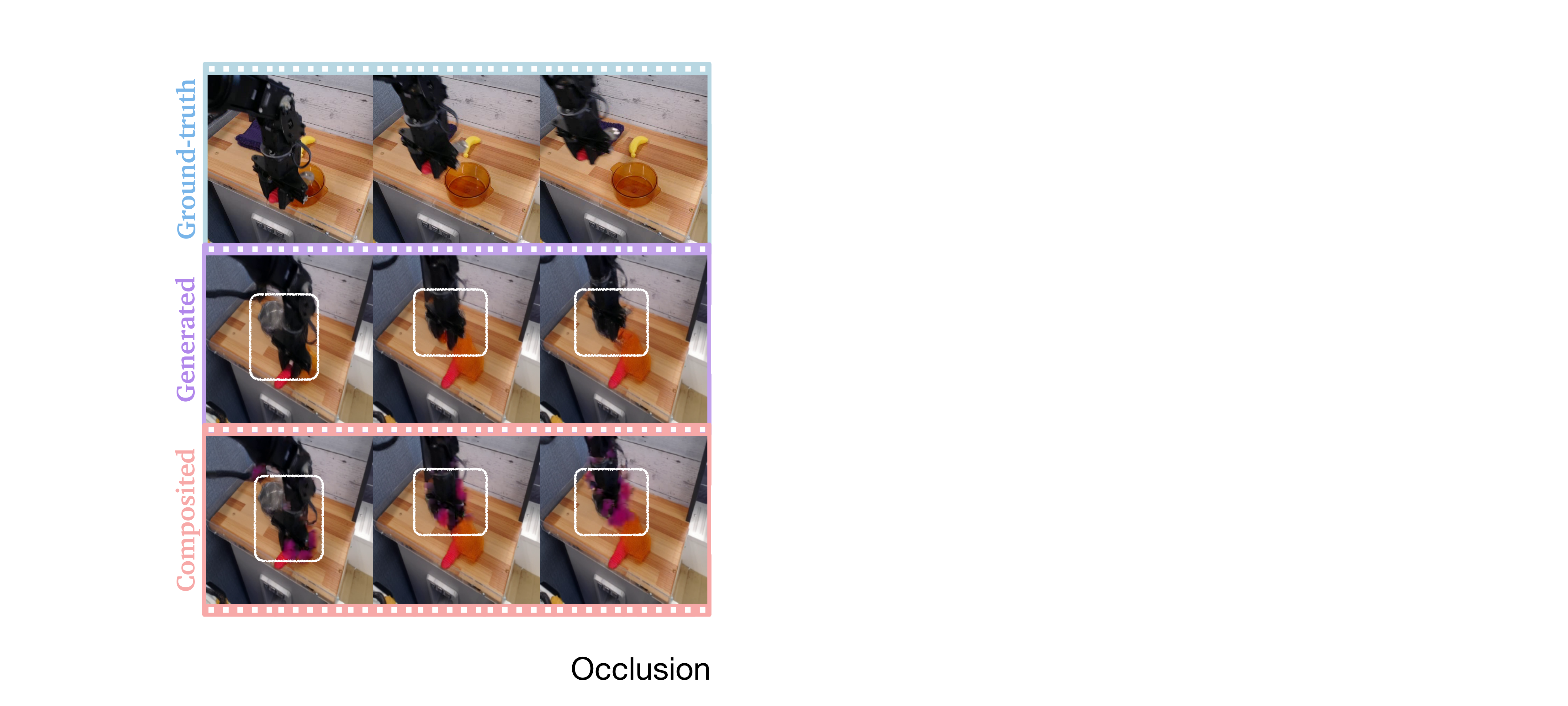}
    \caption{\textbf{Occlusions.} The robot arm occludes some areas of the scene while grasping the red spoon. \algname identifies the occluded areas as high-uncertainty regions, in line with intuition.}
    \label{fig:occlusion}
\end{minipage}
\end{figure}

\subsection{Are \texorpdfstring{\algname}{C3}'s uncertainty estimates interpretable?}
\label{sec:exp_interpretable}
Here, we examine the interpretability of the video model's uncertainty estimates in video trajectory prediction in robot manipulation.

\p{Qualitative Results}
In~\Cref{fig:confidence_heatmap}, we show the ground-truth and generated videos from the video model, along with a visualization of the video model's confidence using a confidence heatmap, which transforms the model's confidence predictions to the RGB color space using a color map. 
In the composited uncertainty map, the \emph{red} regions represent areas of \emph{high uncertainty}, corresponding to locations where the model is unsure if the generated video matches the ground-truth video.

In~\Cref{fig:confidence_heatmap},
we observe that the video model is uncertain about the robot's interaction with the pot in the video. After attempting to grasp the pot, the video model hallucinates a green object appearing within the robot's gripper. As the interaction proceeds, the object morphs in non-physical ways with unrealistic changes to its shape and color. \algname identifies these hallucinations, revealing that the video model is highly uncertain about its hallucinations, as indicated by the red regions in the composited uncertainty map. 
These results underscore the interpretability of our proposed UQ method.

\p{Quantitative Results}
We assess the correlation between the estimated confidence of the video model and the error between the ground-truth and generated latent videos using the \textit{Shepherd's Pi} correlation~\cite{rousselet2012improving}, which is a robust correlation method that uses bootstrapping to identify outliers that would otherwise skew the correlation coefficient. 
For calibrated models, one would expect a negative correlation between the estimated confidence and the error between the ground-truth and generated videos, generally indicating an increase in the uncertainty of the video model as the video error increases.
As expected, for the \nc and \tc models, we observe a statistically significant \textit{negative} correlation of $-0.373$ and $-0.172$ between the confidence estimates and the absolute errors in the generated video at a $99\%$
significance level, respectively. However, for the \mc model, we obtain a positive correlation coefficient due to the inadequate supervision of rightmost bins which correspond to greater latent error values, given that most of the generated video patches have a notably small latent error.
For a more informed analysis, we examine the correlation of the confidence estimates of the \mc model with the maximum bin edge set at $0.2$. We find that the confidence error is negatively correlated  with the video error at the $99\%$ significance level, with a coefficient of~${-0.130}$. %

We provide additional visualizations showing the remarkable ability of \algname in capturing hallucinations, i.e., in localizing regions of the generated video where the model inserts artifacts such as previously non-existent objects or morphed objects.
In~\Cref{fig:conf_heatmap_artifacts}, the video model hallucinates the robot picking up a carrot out of nowhere. \algname localizes the corresponding areas as regions of high uncertainty.
Similarly, \algname reveals that the video model is uncertain about the interaction dynamics of the plushy toy in~\Cref{fig:red_around_grasped_object}. After the initial grasp, the toy deforms and changes its color in ways that violate the laws of physics. \algname highlights the corresponding regions as areas of high uncertainty.
Further, our proposed method is able to capture uncertainty from occlusions, which is shown in~\Cref{fig:occlusion}. 
While grasping the red spoon, the robot's arm occludes some areas of the scene. As the task proceeds, \algname identifies the occluded areas behind the robot as high-uncertainty regions, in line with intuition.

\subsection{Detecting OOD Inputs at Inference}
\label{sec:ood_experiments}

Here, we explore the performance of \algname in out-of-distribution (OOD) detection at inference time, noting the importance of calibrated uncertainty estimates in reliable OOD detection.
Concretely, a trustworthy video model would express higher uncertainty when given a task that lies outside of the training distribution, reflecting its lack of knowledge of the scene and object dynamics. 
We examine the calibration of the uncertainty estimates computed by our method in these settings through real-world experiments on a WidowX 250 robot in the Bridge setup within a toy kitchen environment.

We consider OOD conditions across five axes: background, lighting, environment clutter, target object (task), and robot end-effector, creating environment settings that are noticeably different from those seen in the Bridge dataset. 
For the \emph{background} axis, we introduce novel background objects into the scene, e.g., computer accessories and sport equipment. For \emph{lighting}, we vary the RGB value of the environment lighting. We add more objects to the scene in the \textit{environment clutter} setting and introduce novel target objects or objects in unseen configurations for grasping in the \textit{target object} test setup. Along the \textit{end-effector} axis, we create OOD conditions by modifying the appearance of the end-effector by attaching lightweight objects (e.g., a towel, plushy toys, etc.) to the robot without noticeably altering the robot dynamics.
In these settings, we collect $50$ ground-truth trajectories ($10$ trajectories per setting) and generate videos from the video model using the associated robot actions.

In~\Cref{fig:experiments_ood_frames}, 
we show the ground-truth and generated videos and composited uncertainty maps for one trajectory in the \emph{Background} and \emph{Lighting} categories, with additional visualizations provided in Appendix~\ref{app:ood_experiments}. For unfamiliar background objects (e.g., a skeleton), we observe that the model becomes uncertain about the dynamics between the robot and the background object as it approaches the background object, which can be seen in the generated video. \algname localizes this uncertainty, accurately delineating more confident video patches from less confident ones.
Likewise, we see that the video model struggles to generate accurate videos under unseen lighting conditions, with an observable degradation in the video quality over time. Our method again captures the increasing uncertainty of the video model spatio-temporally.

\begin{figure}[th]
    \centering
    \adjustbox{width=\linewidth}{
        \includegraphics[width=\linewidth]{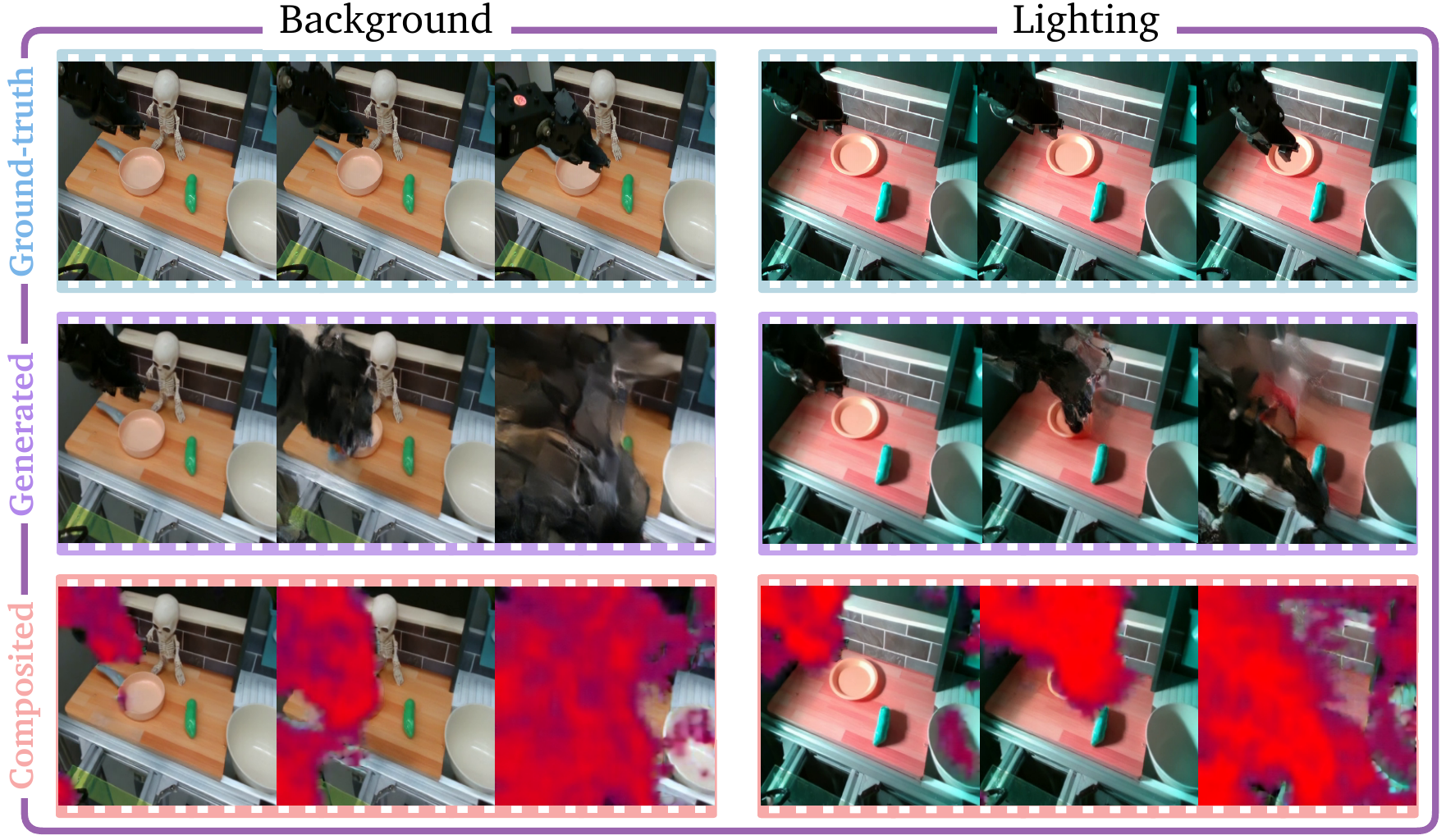}
    }
    \caption{\textbf{OOD detection.} \algname is able to accurately localize hallucinations (identified by the red regions) in OOD scenarios where the model is presented with inputs outside of its training distribution.}
    \label{fig:experiments_ood_frames}
\end{figure}

\subsection{Ablations}
\label{sec:exp_ablations}
We conduct ablation studies on the effects of simultaneous video generation and uncertainty quantification, alignment between latent space error and video quality metrics, and end-to-end training without stop-gradient. We discuss additional ablations in Appendix~\ref{app:ablations}.

\begin{table}[t]
  \centering
  \begin{minipage}{0.45\textwidth}
    \centering
    \caption{Perceptual Quality of the Vanilla Video Model without UQ}
    \label{tab:video_quality}
    \centering
    \setlength{\tabcolsep}{8pt}
        \begin{tabular}{l c c}
            \toprule
             Metric & Vanilla & \algname (\textbf{Ours}) \\
             \midrule
             SSIM & 0.75 & \textbf{0.76} \\
             PSNR & 18.4 & \textbf{18.6} \\
             LPIPS & 0.28 & 0.28 \\
             \bottomrule
         \end{tabular}
  \end{minipage}
  \hfill %
  \begin{minipage}{0.45\textwidth}
    \centering
    \caption{Comparison of Video Latent Error to Perceptual Metrics.}
    \label{tab:latent_error_to_perceptual}
        \begin{tabular}{l c c}
            \toprule
             Method & Correlation & Significance $(\uparrow)$ \\
             \midrule
             SSIM & $-0.54$ & $99\%$ \\
             PSNR & $0.83$ & $99\%$ \\
             LPIPS & $0.73$ & $99\%$ \\
             \bottomrule
         \end{tabular}
  \end{minipage}
  
\end{table}

\p{Effects on Video Generation Quality}
We report standard video quality metrics (SSIM, PSNR, LPIPS) 
for our UQ video model (\algname) compared to the vanilla video model without UQ in~\Cref{tab:video_quality}. Our model achieves marginally better scores on the perceptual metrics, showing that our method does not degrade video quality. %
In head-to-head rankings, our model outperforms the vanilla model with scores of 62.5\%, 58.8\%, and 62.5\% on SSIM, PSNR, and LPIPS.

\p{Validity between Latent Space Error and Video Quality Metrics}
\Cref{tab:latent_error_to_perceptual} shows the correlation between the $L_{1}$ latent space error and standard perceptual metrics. We find that the $L_{1}$ latent error is strongly correlated with SSIM, PSNR, and LPIPS at the $99\%$ significance level. Note that one would expect a negative correlation between the $L_{1}$ latent error and SSIM and PSNR, and a positive one between the $L_{1}$ latent error and LPIPS. Our results underscore that the $L_{1}$ latent error is well-aligned with standard perceptual metrics, without the additional cost of decoding the video latents and computing these perceptual metrics.

\p{End-to-End Training without Stop-Gradient}
We examine the calibration of \algname when training the video model and the UQ probe $\mbf{f}_{\phi}$ end-to-end without a stop-gradient operator between both models. In other words, we update the parameters of the video model with the gradient of $\mbf{f}_{\phi}$, assessing the existence of any training synergies from joint training. We train the \tc model with and without the stop-gradient operator and find no significant difference in the calibration of the resulting confidence estimates. We observe a difference of about $5\mathrm{e}^{-3}$ and $3\mathrm{e}^{-3}$ in the ECE and MCE, respectively, highlighting that both variants achieve the same level of calibration. However, we note that backpropagating the gradient of $\mbf{f}_{\phi}$ through the video model incurs additional computation overhead. As a result, the stop-gradient operation provides a computational edge, especially in large video models with billions of parameters.

\section{Conclusion}
\label{sec:conclusion}
We present a method for calibrated controllable video synthesis that trains video models to know when they don't know. We use proper scoring rules as loss functions to achieve both accuracy and calibration in video generation. By quantifying uncertainty in the latent space, our proposed method overcomes computation challenges and training instability associated with pixel-space approaches. Furthermore, we map latent-space uncertainty to interpretable pixel-space confidence estimates that are well-aligned with human intuition. We show that our method is able to precisely localize hallucinations in generated videos. Likewise, we demonstrate the calibration of the confidence estimates computed by our method across different robot embodiments and tasks, and further show the effectiveness of \algname in detecting out-of-distribution inputs at inference time.

\section{Limitations and Future Work}
\label{sec:limitations_future_work}
Although we demonstrate the calibration of \algname in OOD scenarios, its theoretical calibration guarantees only hold within the training data distribution through the use of proper scoring rules. Specifically, the diversity of the training data and the presence of a distribution shift at inference influences the observed calibration of the confidence estimates. We emphasize that this limitation is not unique to our approach. Nevertheless, we reiterate that our results show that our method produces calibrated uncertainty estimates even in OOD settings.
Future work will explore training strategies for better coverage of the test distribution.
Additionally, long-duration temporal consistency of the confidence estimates computed by our method is limited by the history length of the conditioning inputs of the video model. With smaller historical contexts, our method may lose track of uncertain video patches over time. Long-duration video generation remains an open research problem, which will be explored in future work.
Looking forward, we believe that as applications of video models in robotics mature, rigorous uncertainty quantification will be crucial to their practical effectiveness.

\section*{Acknowledgments}
The authors were partially supported by the NSF CAREER Award \#2044149, the Office of Naval Research (N00014-23-1-2148), and a Sloan Fellowship.

\clearpage

\bibliographystyle{unsrtnat}
\bibliography{references.bib}

\clearpage

\beginappendix{
    
\vspace{-1.5ex}
\section{Detecting OOD Inputs at Inference}
\label{app:ood_experiments}

\begin{figure}[t]
    \centering
    \adjustbox{width=\linewidth}{
        \includegraphics[width=\linewidth]{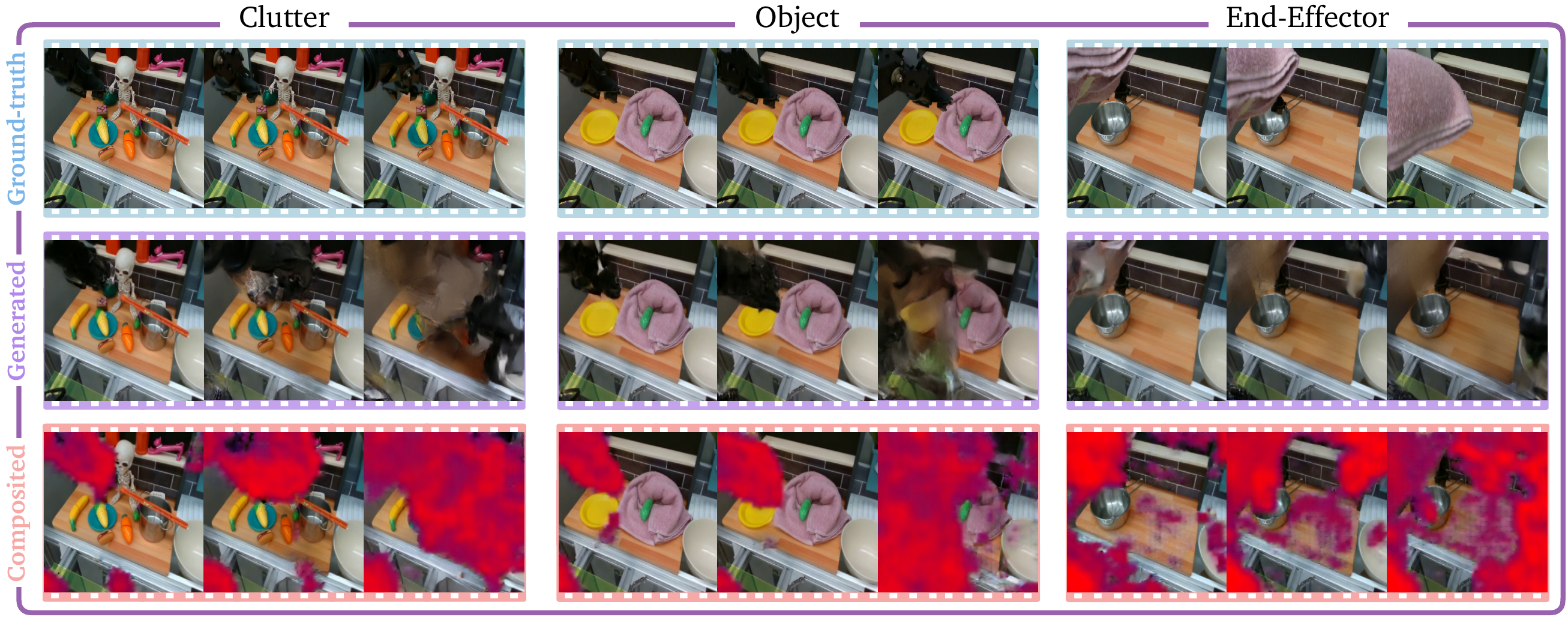}
    }
    \caption{\textbf{OOD detection}, \algname is able to accurately localize hallucinations (identified by the red regions) in OOD scenarios where the model is presented with inputs outside of its training distribution.}
    \label{fig:experiments_ood_frames_clutter_task_robot}
    \vspace{-2.5ex}
\end{figure}

\begin{wrapfigure}[29]{r}{0.35\textwidth}
    \vspace{-8ex}
    \centering
    \begin{minipage}{\linewidth}
        \includegraphics[width=\linewidth]{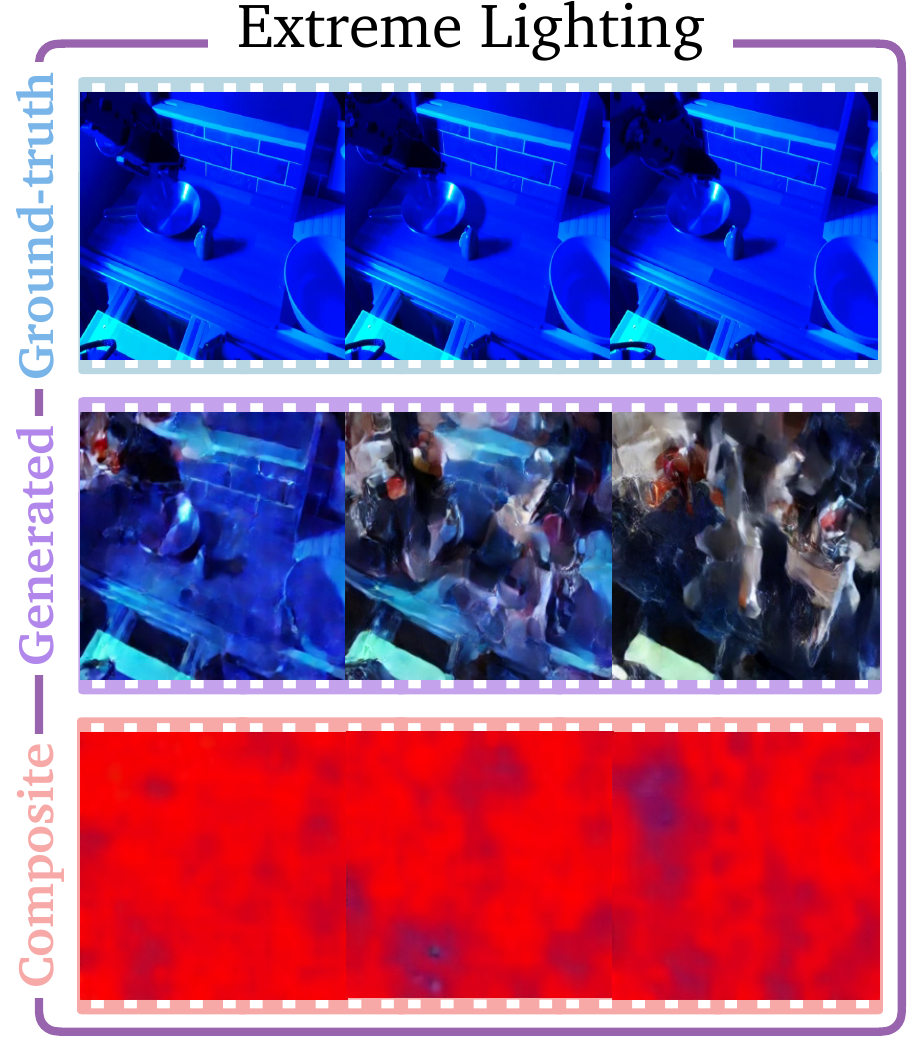}
        \caption{\textbf{Extreme Lighting.} \algname identifies hallucinations where the model tries to reset the lighting. }
        \label{fig:experiments_ood_lighting}
    \end{minipage}
    
    \vspace{1ex}
    
    \begin{minipage}{\linewidth}
        \includegraphics[width=\linewidth]{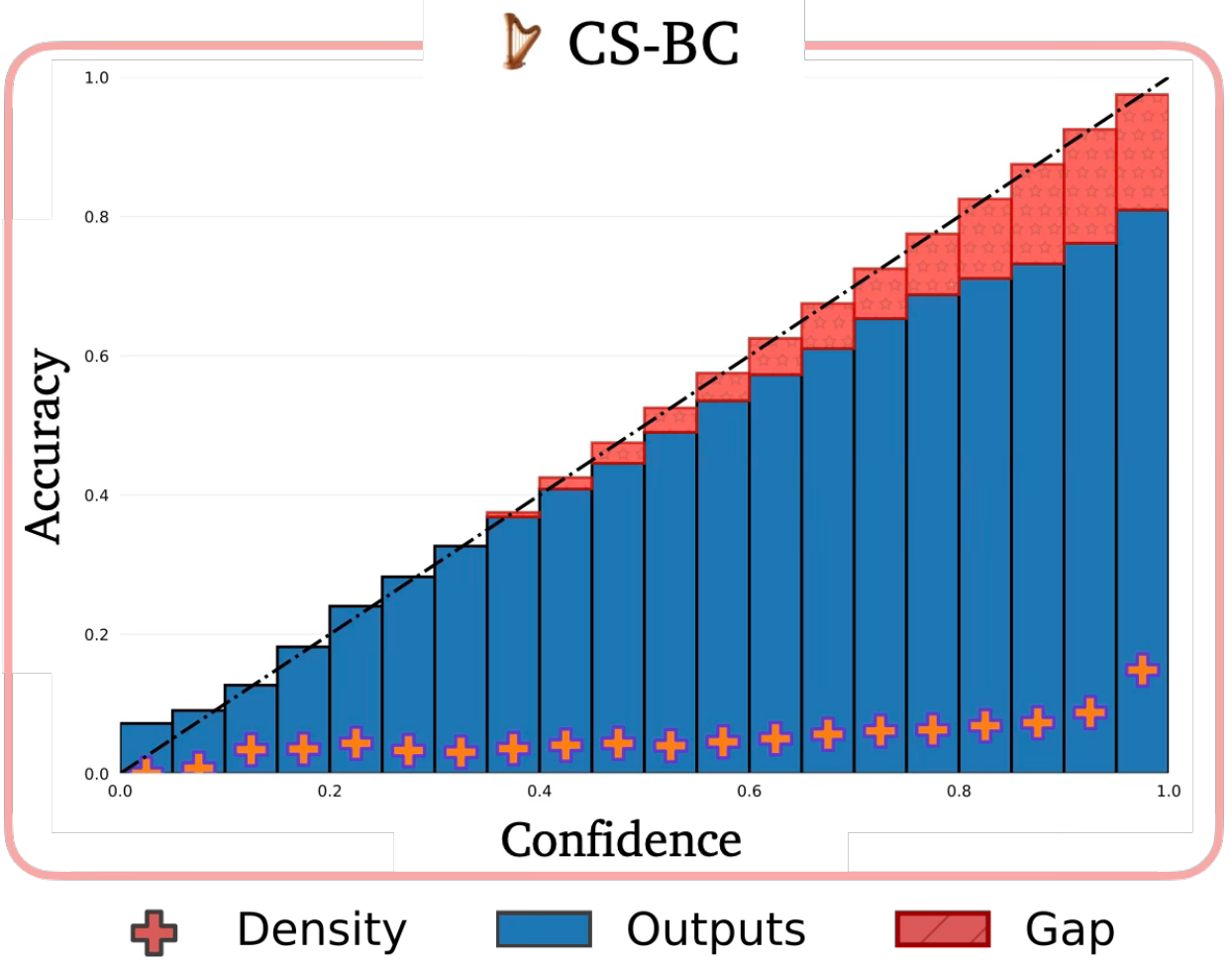}
        \caption{\textbf{Reliability diagram}, showing strong calibration.}
        \label{fig:experiments_ood_calibration}
    \end{minipage}%
\end{wrapfigure}
We provide additional visualizations of the ground-truth and generated videos and confidence predictions under different OOD conditions in~\Cref{fig:experiments_ood_frames_clutter_task_robot,fig:experiments_ood_lighting}.
In cluttered environments, the video model fails to accurately predict the interaction between the robot and the objects in the scene, which \algname correctly identifies. 
Lastly, when an unfamiliar object is attached to the robot end-effector, the video model becomes uncertain about the dynamics of the robot, leading to hallucinations in the generated video. Our method identifies these hallucinations as regions of high uncertainty.
Under extreme lighting conditions, the video model hallucinates a recoloring of the scene in an attempt to match the training data distribution. \algname reveals the model's uncertainty, specifically identifying regions with edited colors as areas of high uncertainty.
In~\Cref{fig:experiments_ood_calibration}, we provide the reliability diagram of \algname in OOD environments. We observe that our method remains well-calibrated with only a very small drop in calibration compared to its performance under nominal conditions. As stated in~\Cref{sec:exp_interpretable}, \algname achieves low calibration errors, with an ECE and MCE of~${9.98\mathrm{e}^{-2}}$ and ${1.71\mathrm{e}^{-1}}$, respectively.

\vspace{-1.5ex}
\section{Calibration on the Bridge Dataset}
\label{app:interpretable_experiments_bridge}
We assess underconfidence vs. overconfidence of our proposed UQ method using reliability diagrams, which visualize the calibration error associated with the uncertainty estimates across different confidence bins. 
In~\Cref{fig:average_reliability}, we show the reliability diagram for each model averaged across all thresholds.
The dashed line in each plot traces the path of perfect calibration, while the cross-shaped markers indicate the density of samples in each bin. Across all models, we observe that with \algname, the video models are well-calibrated, i.e., neither underconfident nor overconfident. Notably, the models tend to be more conservative when unsure about the accuracy of the generated video, as visualized by the bars in the ${[0.3, 0.7]}$ confidence bins exceeding the dashed line. This emergent behavior aligns well with trustworthiness in safety-critical applications, with a greater propensity for the model to express doubt when unsure about the accuracy of the generated video.
Additionally, note that FSC provides the best calibration but lacks flexibility, since it relies on a fixed error threshold. In contrast, MCC and CSBC are more general architectures that trade off \emph{marginal} levels of calibration for broader expressiveness. 
Further, in~\Cref{fig:reliability_fixed_scale_compare}, we compare calibration of the \tc and \nc models at the fixed scale (${\varepsilon_{v} = 0.5}$) used in training the \nc model. We find that both models are well-calibrated, producing relatively the same reliability diagrams.

In~\Cref{fig:calibration_each_threshold}, we provide additional results for \tc, showing calibration across different accuracy threshold levels.
Overall, we see that \algname is well-calibrated across each error threshold, with the top of the confidence bins tracing the diagonal line. 
Further, we observe greater uncertainty at lower thresholds (e.g., ${\varepsilon_v = 0.2}$) which aligns with the intuition that lower accuracy thresholds are generally associated with greater uncertainty in the accuracy of the prediction given the tightness of the threshold.
Conversely, as the threshold increases, the sample densities gradually shift right toward the higher confidence region, aligning with the intuition that larger thresholds afford greater confidence in the accuracy of the generated videos.
Moreover, we observe that at extremely low values of $\varepsilon_v$ ($\varepsilon_v \leq 0.3$), \algname tends to be underconfident, signified by the histograms going above the line of perfect calibration. Further, note that the model's degree of underconfidence decreases as the accuracy threshold increases. Overall, our observations are well-aligned with safety. The video model tends to be more conservative at very low accuracy thresholds, mitigating false negatives, i.e., inaccurate patches that are identified as highly confident regions, which could otherwise lead to harmful consequences.

\begin{figure*}[t]
    \centering
    \includegraphics[width=0.8\linewidth]{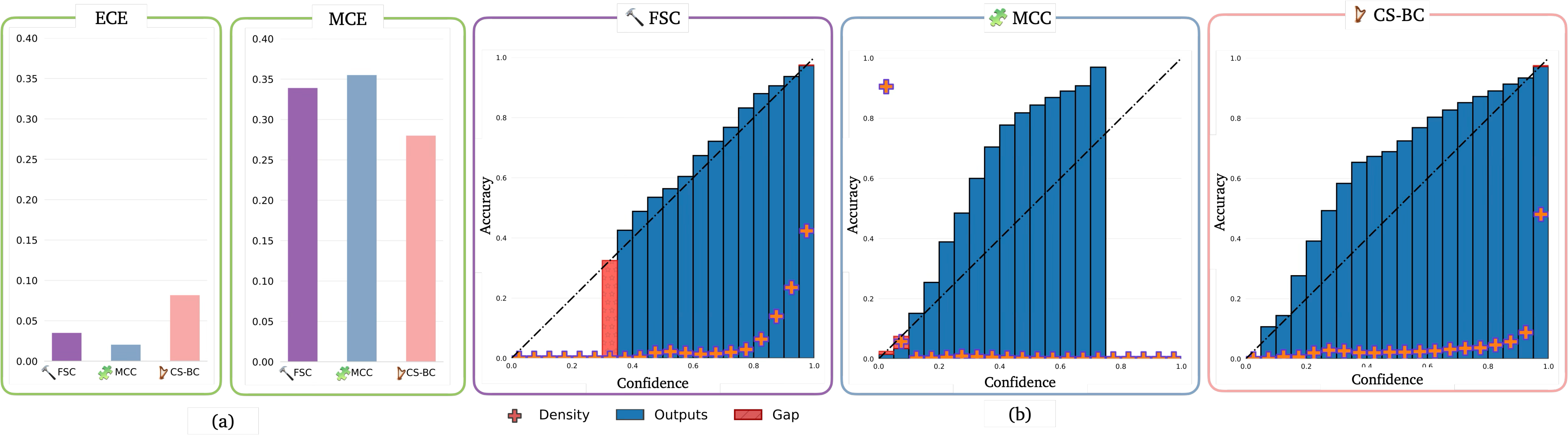}
    \caption{\textbf{Aggregated reliability diagrams.} All methods are well-calibrated.
    Note that FSC provides the best calibration but lacks flexibility, given its dependence on a fixed error threshold. In contrast, MCC and CSBC are more general models that trade off marginal calibration for broader effectiveness.
    }
    \label{fig:average_reliability}
\end{figure*}

\begin{figure}[t]
    \centering
    \includegraphics[width=0.65\linewidth]{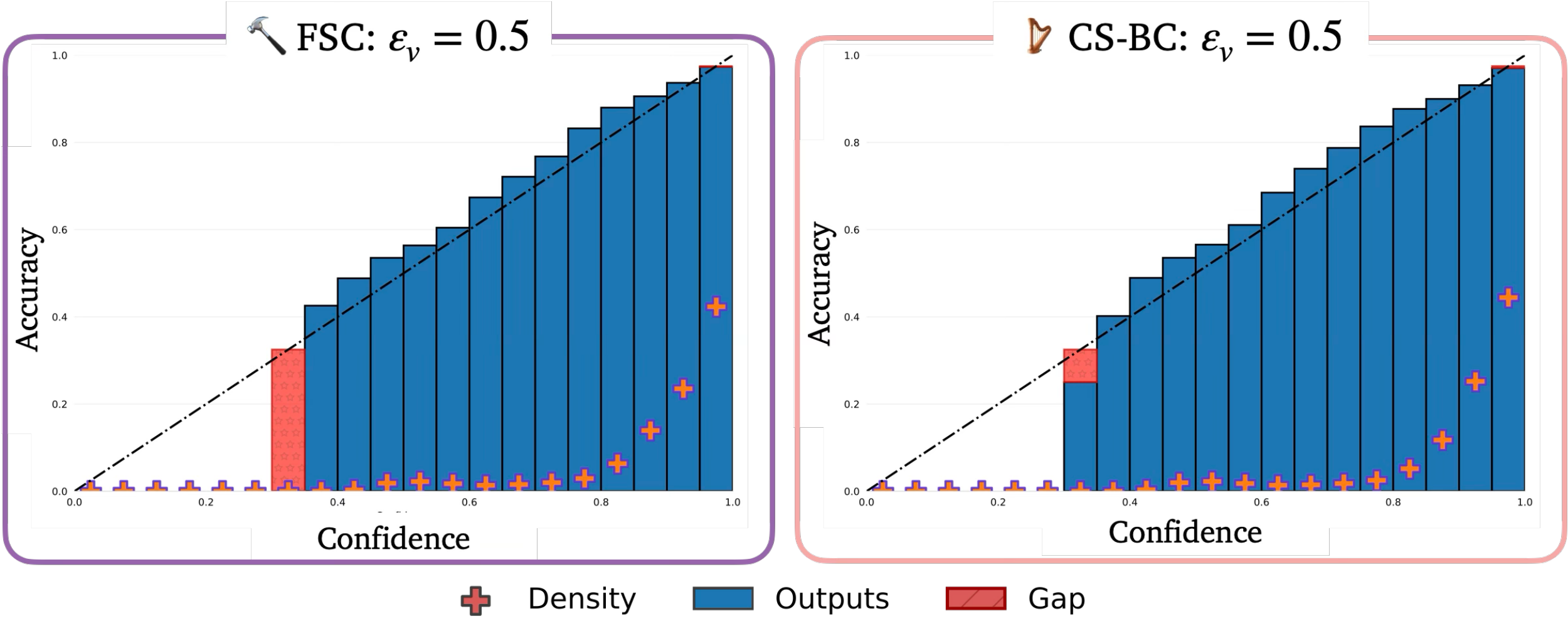}
    \caption{\textbf{Reliability diagrams for FSC and CS-BC.} 
    Both models achieve similar calibration at threshold ${\varepsilon_v = 0.5}$.
    }
    \label{fig:reliability_fixed_scale_compare}
\end{figure}

\begin{figure*}[t]
    \centering
    \includegraphics[width=\linewidth]{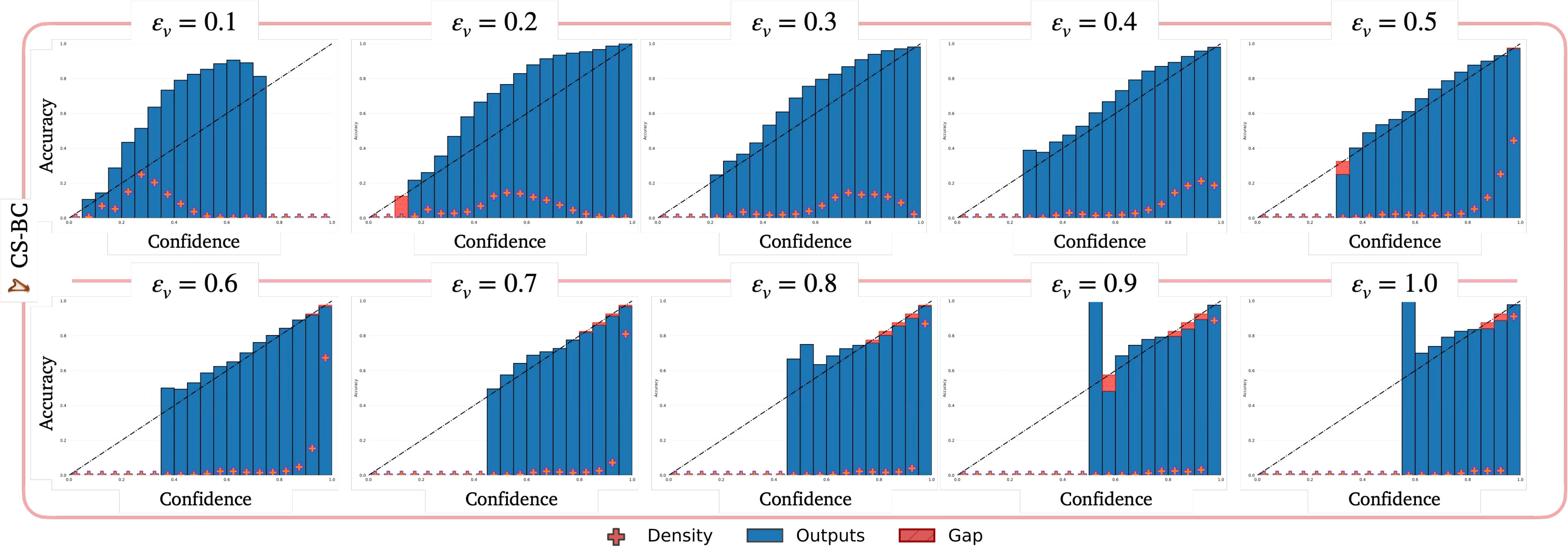}
    \caption{\textbf{Reliability diagram for each threshold.} \algname is well-calibrated across all accuracy thresholds, with some degree of conservativeness at very low thresholds.}
    \label{fig:calibration_each_threshold}
\end{figure*}

\section{Evaluations on the DROID Dataset}
\label{app:droid}
We conduct additional experiments verifying the effectiveness of \algname on the DROID dataset~\cite{khazatsky2024droid}, which covers a much wider range of tasks and environments compared to the Bridge dataset. %
We train the \tc model on this dataset and evaluate the calibration and interpretability of its confidence estimates.
We compute the calibration errors across the test dataset with ECE and MCE values of $7.28\mathrm{e}^{-2}$ and $1.74\mathrm{e}^{-1}$, respectively.
Note that the ECE is again close to the lower bound of the range of calibration errors, highlighting the well-calibrated nature of \algname.
In summary, we find that our method is well-calibrated across a wide range of video prediction robotics problems, with broad amenability to multi-view camera inputs in diverse environments. 
Moreover, the results indicate that \algname remains effective across different robot embodiments, despite the notable difference between the Panda robot in the DROID dataset and the WidowX robot in the Bridge dataset.

\p{Interpretability}
Next, we examine the interpretability of the confidence estimates computed by \algname.
First, we compute the correlation between the predicted confidence and the absolute error between ground-truth and generated latent videos. Similar to the Bridge dataset, a negative correlation between both quantities indicates greater interpretability of the uncertainty estimates.
On the DROID dataset, we observe a negative correlation coefficient of $-0.149$ with a significance level greater than $99\%$, showing a desirable alignment between the model's estimated confidence and the observed accuracy of the generated videos. In other words, with \algname, the video model tends to be more uncertain about the generated videos when it's more likely wrong. Our results can be explained by the use of proper scoring rules to achieve both calibration and accuracy.

\begin{wrapfigure}[13]{r}{0.4\textwidth}
    \vspace{-4.2ex}
    \centering
    \includegraphics[width=0.9\linewidth]{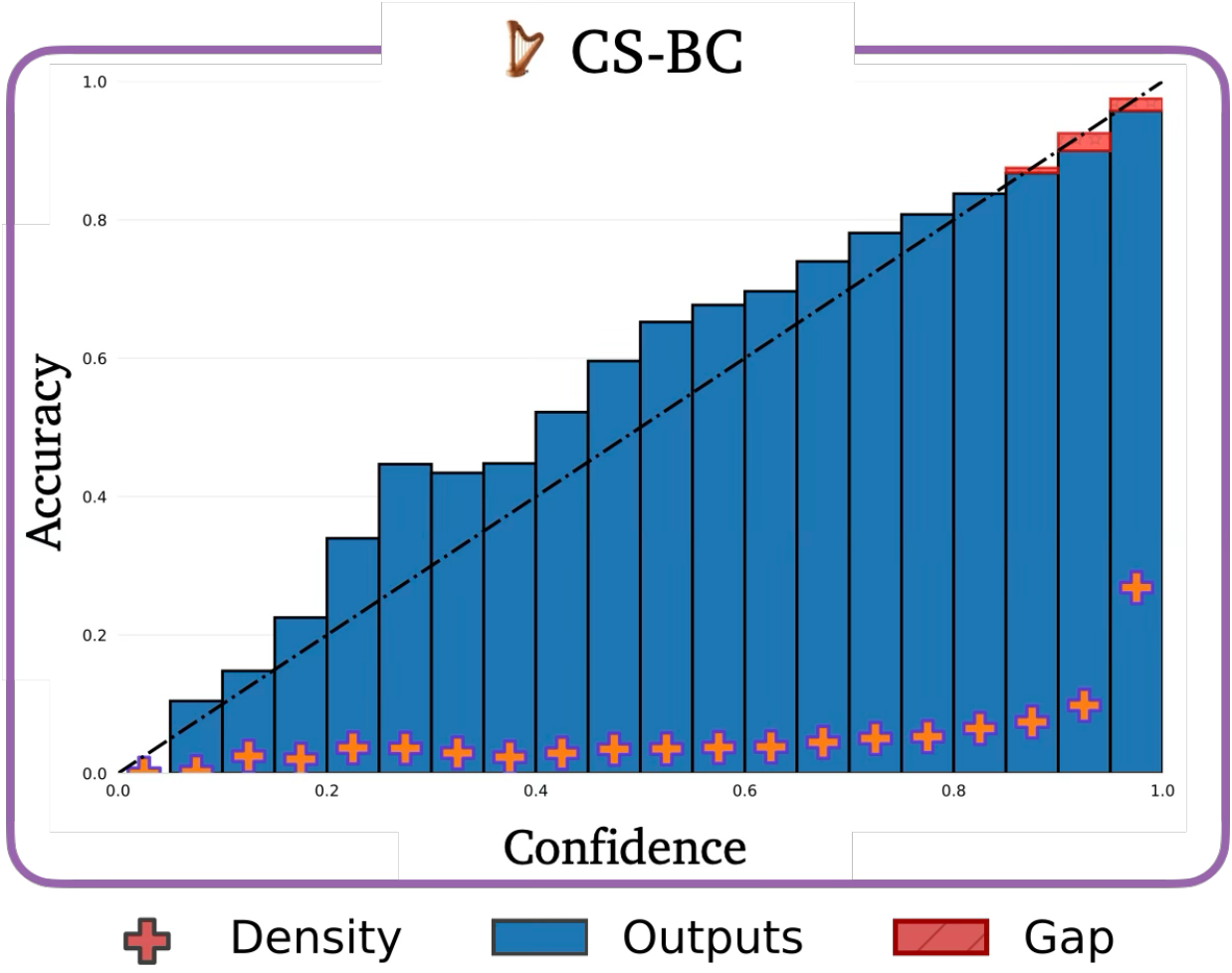}
    \caption{\textbf{Reliability diagram on DROID}, showing calibration.}
    \label{fig:droid_calibration}
\end{wrapfigure}
We provide visualizations of the ground-truth and generated videos, along with the estimated confidence, highlighting the calibration and interpretability of \algname's uncertainty predictions.
First, in~\Cref{fig:droid_calibration}, 
we show the reliability diagram of \algname computed across the videos in the test dataset. 
Similar to the Bridge dataset, we observe a near-perfect calibration of the confidence estimates.

\begin{figure}[h]
    \centering
    \adjustbox{width=0.7\linewidth}{
        \includegraphics[width=\linewidth]{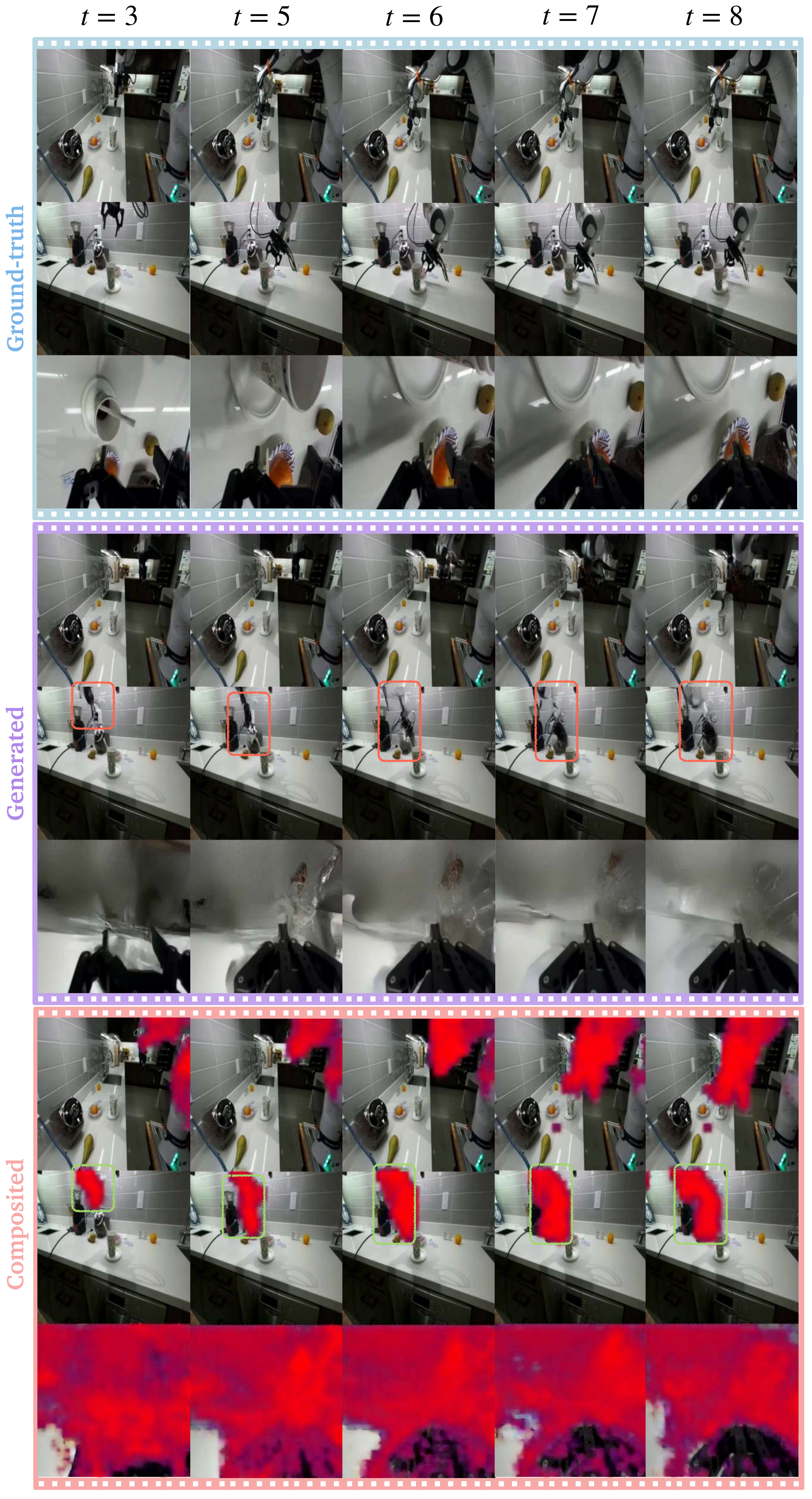}
    }
    \caption{\textbf{Hallucination.} \algname identifies hallucinations in the generated videos from the DROID dataset as areas of high uncertainty.}
    \label{fig:droid_hallucination}
\end{figure}

From~\Cref{fig:droid_hallucination}, we observe that the video model is able to successfully generate multi-view videos, capturing the evolution of the task from two side-camera views (shown in the first-two rows) in addition to a wrist-camera view (shown in the third row).
However, we also see a degradation in the relative quality of the generated videos compared to the Bridge dataset, reflecting the increased difficulty associated with multi-view video generation compounded with the greater diversity of the DROID dataset.
This observation is particularly conspicuous in the wrist-camera view generated by the video model where the details of the scene quickly fade into a blurry background.
Despite this degradation in video quality, \algname is still able to produce interpretable, calibrated uncertainty estimates at a fine-grained level, localizing non-confident regions of the video in each camera view.
In particular, we see that in the right-camera view of the generated video (second row), our method captures hallucinations of the robot's gripper that appear in the video---the gripper morphs and elongates.
Likewise, \algname correctly identifies the inaccurate blurry background in the wrist-camera view (third row) as a region of high uncertainty.

\clearpage

\section{Additional Ablations}
\label{app:ablations}

We perform additional ablations to study the effects of different scoring rules, diffusion forcing, and end-to-end training without the stop-gradient operation, with respect to calibration of the confidence estimates computed by \algname. %

\p{Proper Scoring Rules} We examine the calibration of video models trained with the binary cross-entropy loss function and the Brier loss function. Since both loss functions are proper scoring rules, one would expect both models to achieve similar calibration performance. 
We train two variants of the \tc model using these loss functions and observe similar calibration levels between both models, in line with the preceding expectation. The results indicate a negligible difference in the ECE of about ${3\mathrm{e}^{-4}}$ and a similarly negligible difference in the MCE of about ${6\mathrm{e}^{-3}}$.

Further, we visualize the reliability diagrams associated with each model in~\Cref{fig:ablation_scoring_rule_reliability}. We see very similar trends in the reliability diagrams across all confidence bins. Specifically, the models are well-calibrated but more conservative when unsure about the accuracy of the generated video. These findings underscore the general applicability of our proposed framework to strictly proper scoring rules.

\begin{figure}[th]
    \centering
    \includegraphics[width=0.7\linewidth]{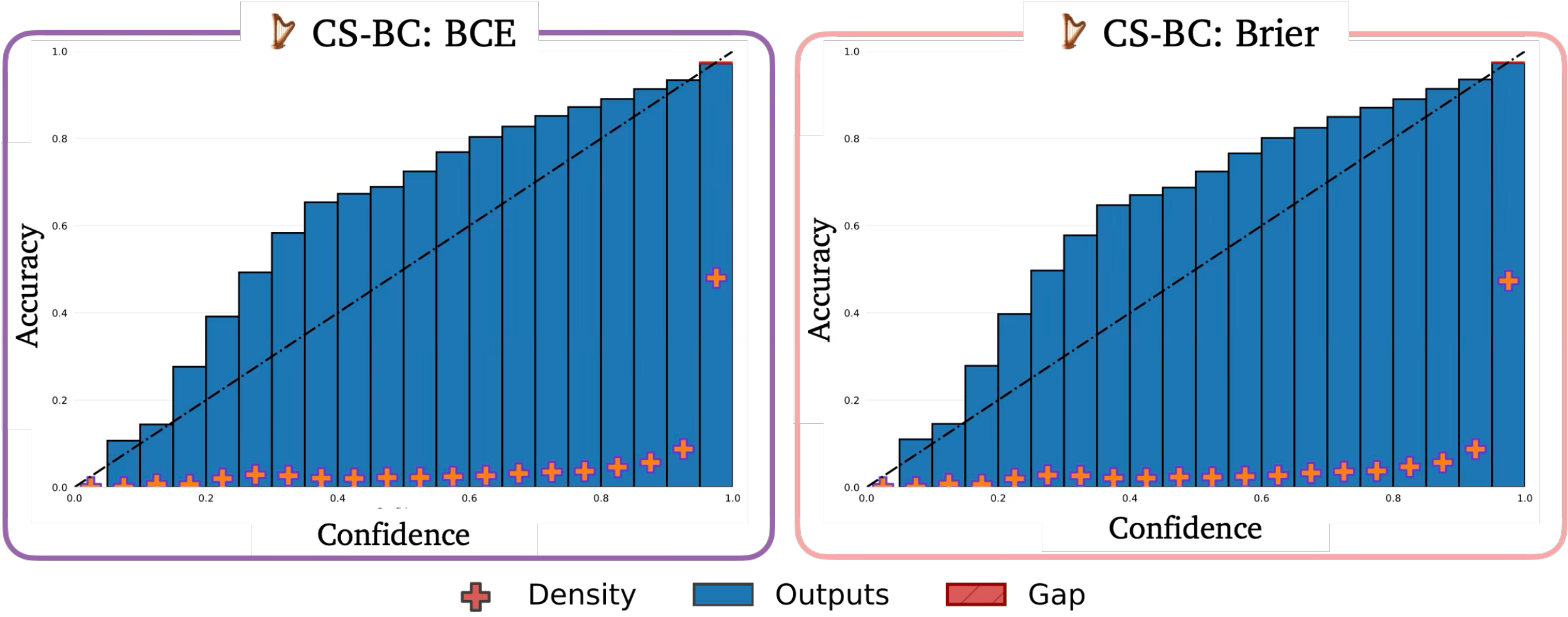}
    \caption{\textbf{Reliability diagram for ablation on proper scoring rules.} \algname remains well-calibrated with the BCE and Brier scores.}
    \label{fig:ablation_scoring_rule_reliability}
\end{figure}

\p{Diffusion forcing} 
We explore diffusion forcing in generating the confidence maps during video prediction and compute the ECE and MCE to evaluate calibration. We visualize the reliability diagrams of the \tc model with and without diffusion forcing in~\Cref{fig:ablation_diffusion_forcing}, showing that diffusion forcing degrades the calibration of \algname. We hypothesize that this observation could be due to the effects of the recurrence in diffusion forcing, which leads to a notable increase in the conservatism of the confidence estimates. With diffusion forcing, the ECE rises to about $3.3\mathrm{e}^{-1}$ with an associated increase in the MCE to about $5.54\mathrm{e}^{-1}$. A comprehensive analysis of the effects of diffusion forcing on confidence prediction lies beyond the scope of this paper; hence, we leave it to future work.

\begin{figure}[th]
    \centering
    \includegraphics[width=0.7\linewidth]{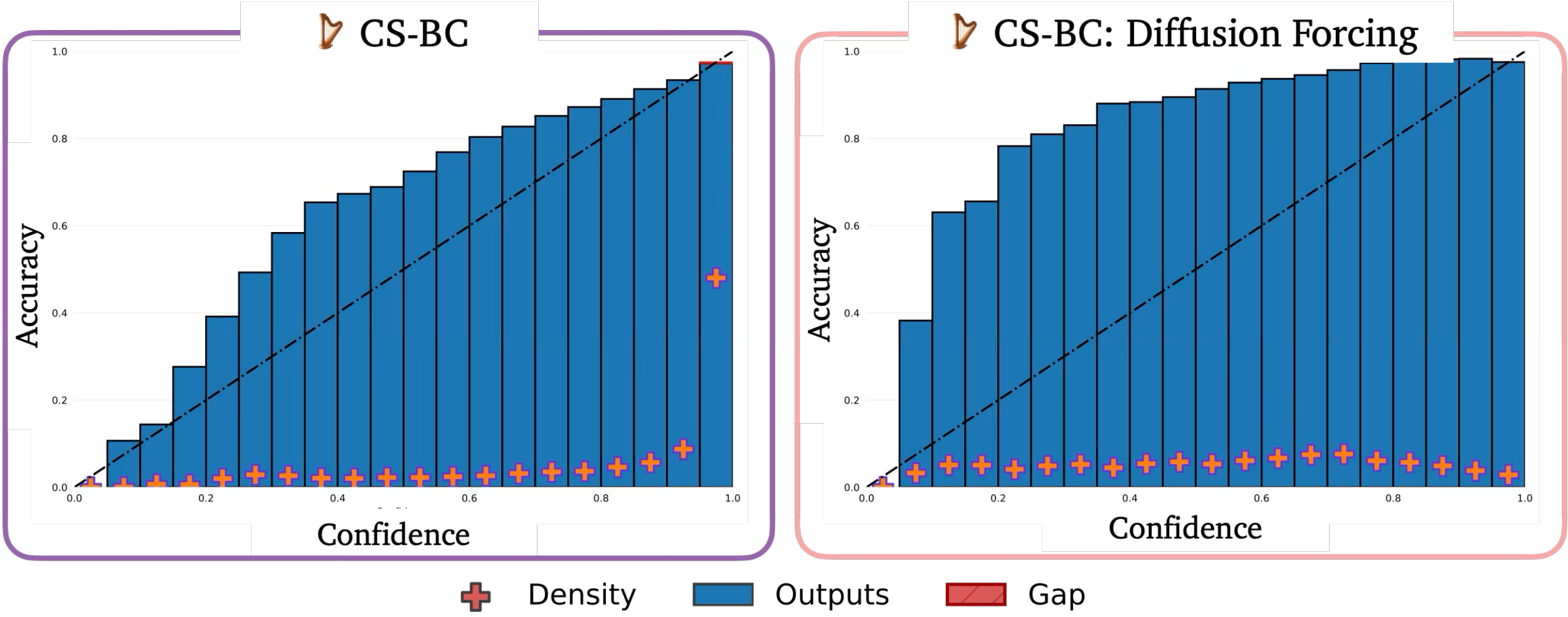}
    \caption{\textbf{Reliability diagram for ablation on diffusion forcing.} Diffusion forcing increases underconfidence of \algname, degrading calibration.}
    \label{fig:ablation_diffusion_forcing}
\end{figure}

\begin{wraptable}[8]{r}{0.5\textwidth}
    \vspace{-0.25ex}
    \centering
    \caption{Comparison to other UQ Baselines ($^{\star}$Ensembles are broadly impractical due to high cost.)}
    \label{tab:baselines}
        \begin{tabular}{l c c}
            \toprule
             Method & Correlation & Significance $(\uparrow)$ \\
             \midrule
             \algname~(\textbf{ours}) & $-0.36$ & $99\%$ \\
             Ensemble$^{\star}$ & $0.42$ & $99\%$ \\
             Heuristic & $-0.01$ & $43\%$ \\
             \bottomrule
         \end{tabular}
\end{wraptable}
\p{Additional Baselines}
Established UQ baselines are prohibitively expensive to implement for large generative models, e.g., deep ensembles would require significantly greater GPU VRAM and compute time.
Nonetheless, we compare our method to an approximate implementation of deep ensembles, where we exploit the video model stochasticity for multiple generations, using the resulting variance among generated videos as an uncertainty signal.
Further, we compare against a heuristic baseline using the raw diffusion noise as uncertainty. 
Since these baselines do not estimate uncertainty with a  
probability distribution, standard calibration metrics for classification (e.g., ECE/MCE) do not apply.
Consequently, we examine the correlation between latent space error and estimated uncertainty to evaluate calibration, which is standard for these estimates.
\Cref{tab:baselines} shows their correlation coefficients and significance levels $\alpha$. \algname measures confidence and should ideally be negatively correlated with error, while ensemble and heuristics measure uncertainty and thus should be positively correlated. \algname outperforms the heuristic baseline (whose uncertainty estimates are not statistically significant) and is more competitive than the ensemble, given its lower computation cost while achieving comparable calibration.

\section{Additional Implementation Details}
\label{app:eval_procedure}

\p{Model Training and Evaluation}
We implement the video generation model using a diffusion transformer architecture with $49$ transformer layers, each with $4$ heads and an embedding dimension of $512$. We use the Stable Video Diffusion (SVD) VAE~\cite{blattmann2023stable} for encoding the videos into the latent space, extracting video patches with no temporal compression. Note that we use SVD for its generality although our approach is amenable to other (larger) models.
Using a learning rate of $1\ex^{-5}$ with a cosine decay scheduler, we train the video model for $50$k iterations with a batch size of $4$ for the Bridge dataset and a batch size of $2$ for the DROID dataset, on 8 NVIDIA L40 GPUs. We use an input video resolution of $256 \times 256$ for both datasets. We stack the multi-view camera inputs in the DROID dataset to construct a single video frame as input.

We train on the entire train split of the Bridge dataset; however, given the large size of the DROID dataset, we only train on a subset (TRI), covering both success and failure videos, across a broad range of tasks and environments.
When training the \tc model, we randomly sample thresholds from a discrete set of $28$ threshold values constructed linearly from 0.1 to 1 with adaptive (denser) spacing at lower thresholds between $0.1$ and $0.3$ to more effectively capture the fine-grained signal existent in this subrange. We define the output bins of the \mc model using the same set of threshold values. 
We evaluate the video models on $110$ and $83$ trajectories in the test split of the Bridge and DROID datasets, respectively.

When evaluating the \tc model at inference time, we use $10$ linearly-spaced values of $\varepsilon_v$ ranging from $0.1$ to $1.0$.
Except otherwise noted in our qualitative evaluations, we visualize the confidence predictions at the midpoint of the range corresponding to threshold values of $0.5$ or $0.6$. We compute the aggregate calibration errors for each model using standard implementations over $20$ bins and compute the video accuracy using the $\ell_{1}$ loss.

\p{Visualizing Latent-Space Errors}
We visualize the latent-space error between representative generated videos and the corresponding ground-truth videos in the RGB space in~\Cref{fig:latent_space_error}, to aid understanding of the accuracy resolutions discussed in this work. 
Intuitively, one would expect a calibrated video model to more confidently identify regions with very low or high errors as accurate or inaccurate, respectively, and to be more uncertain about other regions.
For alignment with human intuition, we construct a latent-space color map with three basis colors, with the extreme points of the color map defined by \emph{blue} and \emph{green}, corresponding to the minimum point (low-error region) and the maximum point (high-error region) of the error span, respectively. We use the color \emph{red} to represent the middle region of the error span, which induces an interpretable confidence heatmap, discussed later in this section.  
Notably, \Cref{fig:latent_space_error} shows that an error span over the range $[0, 1]$ is sufficient to capture essentially all observable error values. Specifically, the resulting color maps contain almost no green region, associated with the maximum end of the error span. We find that most of the error values lie within the selected error span, further justifying the accuracy resolutions utilized in our experiments.

\begin{figure}[th]
    \centering
    \includegraphics[width= \linewidth]{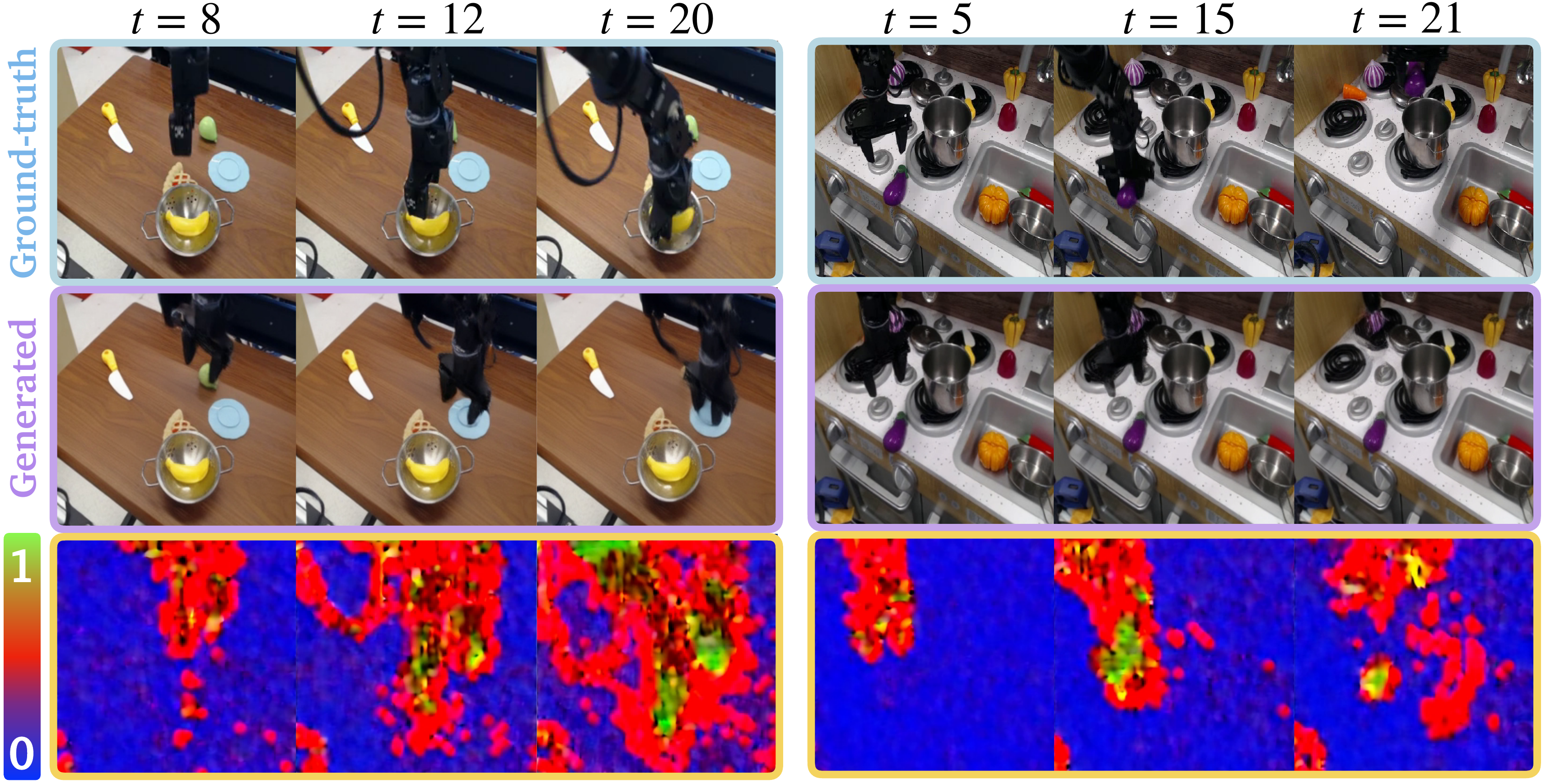}
    \caption{\textbf{Latent space error.} We visualize the latent-space video error in the RGB space, showing the observable range of the errors.}
    \label{fig:latent_space_error}
\end{figure}

\section{Preliminaries}
We provide relevant background material on video models and proper scoring rules.

\subsection{Video Diffusion/Flow-Based Models}
\label{app:diffusion}

Video diffusion models (and more generally flow-based video generation models) \cite{peng2025open, wan2025wan, agarwal2025cosmos, kong2024hunyuanvideo, yang2024cogvideox} have emerged as the dominant model architecture for controllable, high-fidelity video generation, capturing fine-grained scene detail and longer video durations compared to alternative architectures. 
Video diffusion models learn a data distribution $p_{\theta}(\mbf{x})$ over video samples ${\mbf{x} \in \mcal{U}}$ by first destroying the underlying structure in the training data through a \emph{forward} diffusion process and subsequently restoring the structure through a \emph{reverse} diffusion process \cite{sohl2015deep, ho2020denoising}, where $\mcal{U}$ represents the space of videos and $\mbf{x}$ consists of a sequence of video frames.

In denoising diffusion probabilistic models (DDPM)~\cite{ho2020denoising}, the forward diffusion process adds Gaussian noise to the training data following a Markov chain, while the reverse diffusion process recovers the target data by denoising pure noise ${\mbf{x}_{T} \sim \mcal{N}(0, \mbf{I})}$ through the procedure:
\begin{equation}
    \label{eq:closed_form_forward_diffusion_process}
    \begin{aligned}
        q(\mbf{x}_{t} \mid \mbf{x}_{0}) \sim \mcal{N}(\mbf{x_{t}}; \sqrt{\bar{\alpha}_{t}} \mbf{x_{0}}, (1 - \bar{\alpha}_{t}) \mbf{I}),
    \end{aligned}
\end{equation}
where ${\alpha_{t} \coloneqq 1 - \beta_{t}}$ and ${\bar{\alpha}_{t} \coloneqq \prod_{s = 1}^{t} \alpha_{s}}$, with ${\Sigma_{\theta}(\mbf{x}_{t}, t) = \beta_{t} \bm{I}}$.
For more stable training, the learning problem is reformulated as a noise prediction problem, where the learned model predicts ${\bm{\epsilon}_{t} = (\sqrt{1 - \bar{\alpha}_{t}})^{-1} \mbf{x}_{t} - \sqrt{\bar{\alpha}_{t}} \bm{\mu}}$ and optimizes the loss function:
\begin{equation}
    \label{eq:video_diffusion_loss_fn}
    \mcal{L}_{\theta} = \Expect_{t, \bm{\epsilon}, \mbf{x}_{0}} \left[\norm{\bm{\epsilon} - \bm{\epsilon}_{\theta, t}(\mbf{x}_{t}, t)}_{2}^{2}\right],
\end{equation}
via stochastic gradient descent.

In practice, we use denoising diffusion implicit models (DDIMs) \cite{song2020denoising}, which generalize DDPMs to a non-Markovian forward diffusion process. In effect, DDIMs decouple the forward and reverse diffusion timesteps and leverage this feature to accelerate the video generation process during inference. Concretely, with DDIMs, we can train the diffusion model using longer forward diffusion timesteps and generate new videos using shorter reverse diffusion timesteps, significantly reducing the generation time and computation overhead. Rather than predicting the noise $\bm{\epsilon}$, we predict the velocity $\bm{v}$ with a diffusion transformer.

We train the video model using a robot dataset 
\begin{equation}
\mcal{D} = \{((I_{j, t}, I_{j, t + 1}, a_{j, t}),\ \forall t \in [T_{j}]),\ j = 1,\ldots,N\},
\end{equation}
consisting of $N$ trajectories. Each data sample consists of the current observation ${I_{j, t} \in \mbb{R}^{H \times W \times C}}$, the next observation ${I_{j, t + 1} \in \mbb{R}^{H \times W \times C}}$, and the corresponding action ${a_{j, t} \in \mbb{R}^{m}}$, for the $j$'th trajectory of length $T_{j}$.

At inference, we sample new video frames using: 
\begin{equation}
    \label{eq:diffusion_sampling_process}
    \begin{aligned}
        \mbf{x}_{t - 1} \coloneqq \sqrt{\bar{\alpha}_{t - 1}}\ \tilde{\mbf{x}}_{0}(t)  + \sqrt{1 - \bar{\alpha}_{t - 1}} \left(\frac{\mbf{x}_{t} - \sqrt{\bar{\alpha}_{t}} \tilde{\mbf{x}}_{0}(t)}{\sqrt{1 - \bar{\alpha}_{t}}} \right),
    \end{aligned}
\end{equation}
where ${\tilde{\mbf{x}}_{0}(t) = \sqrt{\bar{\alpha}_{t}}\ \mbf{x}_{t} - \sqrt{1 - \bar{\alpha}_{t}}\ \bm{v}_{\theta}(\mbf{x}_{t}, \mbf{c})}$ denotes the value of $\mbf{x}_{0}$ predicted at timestep $t$ and $\mbf{c}$ denotes the action and timestep embeddings.

\begin{remark}[Velocity-space Accuracy]
    The distance function in \Cref{eq:latent_video_distance} requires executing the reverse diffusion process to generate the latent video $\mbf{x}$, which is computationally expensive during training. To address this challenge, we express the distance function in terms of the predicted and ground-truth velocities, ${\mbf{v}_{\theta}}$ and ${\mbf{v}^{\star}}$, respectively. By manipulating~\Cref{eq:diffusion_sampling_process} algebraically, we derive the relation: 
    \begin{equation}
        \label{eq:distance_function_as_velocity}
        \mbf{d}(\mbf{x}, \mbf{x}^{\star}) = \big\lvert \sqrt{\bar{\alpha}_{t}(1 - \bar{\alpha}_{t - 1})} - \sqrt{\bar{\alpha}_{t - 1}(1 - \bar{\alpha}_{t})} \ \big\rvert\ \mbf{d}(\mbf{v}, \mbf{v}^{\star}),
    \end{equation}
    with the corresponding boolean function $\bm{\acc}$ given by:
    \begin{equation}
        \label{eq:latent_video_accuracy_as_velocity}
        \bm{\acc}(\mbf{v}, \mbf{v}^{\star}) \coloneqq \mbf{d}(\mbf{v}, \mbf{v}^{\star}) \leq \varepsilon_{v},
\end{equation}
where ${\varepsilon_{v} = \frac{\varepsilon}{\big\lvert \sqrt{\bar{\alpha}_{t}(1 - \bar{\alpha}_{t - 1})} - \sqrt{\bar{\alpha}_{t - 1}(1 - \bar{\alpha}_{t})} \ \big\rvert}}$.

Notably, quantifying accuracy in the velocity-space only requires a simple linear transformation. 
Given $\bm{\acc}$ in \Cref{eq:latent_video_accuracy_as_velocity}, we train $\mbf{\epsilon}_{\theta}$ and $\mbf{f}_{\phi}$ without the reverse diffusion process for greater training efficiency.
\end{remark}

\subsection{Proper Scoring Rule}
\label{app:proper_scoring_rule}
For a random variable ${Y}$ following the distribution $P(Y)$, a scoring rule $S$ evaluates a prediction $q$ of the probability distribution of $Y$ by assigning a real-valued score, which could be interpreted as a penalty or reward, providing a measure of the quality of the predicted distribution. A scoring rule is proper if:
\begin{equation}
    \underset{y\sim p(Y)}{\mathbb E}\  S(p(Y), y)\leq \underset{y\sim p(Y)}{\mathbb E}\  S(q,  y),
    \label{eq:proper_scoring_rule}
\end{equation}
for all $q$~\cite{gneiting2007strictly}. Note that a proper scoring rule assesses a larger penalty for all predictions that are not equal to the underlying probability distribution of $Y$.
Intuitively, the proper score is minimized when the predicted distribution $q$ matches the true probability of $Y$. The scoring rule is strictly proper if equality holds in~\Cref{eq:proper_scoring_rule}  if and only if $p(Y)=q$.
Some examples of proper scoring rules include the Brier Score (BS)~\cite{glenn1950verification}, Cross Entropy (CE), and Binary Cross Entropy (BCE), discussed in the paper. 

In addition, we note that the expected calibration error (ECE) and maximum calibration error (MCE) are the standard metrics used to measure deviation from perfect calibration:
\begin{align}
    \label{eq:ece}
    \text{ECE} &\coloneqq\sum_{m=1}^M\frac{|B_m|}{n}\left|\text{acc}(B_m)-\text{conf}(B_m)\right|, \\
    \text{MCE} &\coloneqq\max_{m \in \{1,..., M\}}\left|\text{acc}(B_m) - \text{conf}(B_m)\right|,
\end{align}
where $B_m$ represents bin $m$ with cardinality $|B_m|$, and $n$ represents the total number of samples across all bins.

\section{Proofs}
\label{app:proofs}

\propdecomp*

\begin{proof}
    The proof of this proposition follows immediately from the definition of a proper scoring rule in~\Cref{eq:proper_scoring_rule}.
    Hence, from~\Cref{eq:proper_scoring_rule}, the minimum of the right-hand side (RHS) of~\Cref{eq:proper_scoring_rule} is unique and in particular, is attained when ${\mbf{q} = p(\mbf{y})}$. By optimizing $\phi$ to minimize the RHS of~\Cref{eq:proper_scoring_rule}, we have that ${\hat{\mbf{q}} \rightarrow p(\mbf{y})}$, assuming convergence, which is guaranteed under relatively weak conditions \cite{chen2019convergence, he2023convergence}.
   Further, we have that:
    \begin{equation}
        \label{eq:proof_uncertainty_calibration}
        \mbb{P}[\mbf{Y} = \mbf{1} \mid \mbf{Q} = \hat{\mbf{q}}] = \Expect[\mbf{Y} \mid \mbf{Q} = \hat{\mbf{q}}] = \hat{\mbf{q}},
    \end{equation}
    where the last equality follows from the fact that ${\hat{\mbf{q}} = p(\mbf{y})}$, upon convergence.
    The result in \Cref{eq:proof_uncertainty_calibration} indicates calibration of the predicted confidence $\hat{\mbf{q}}$.
\end{proof}

}

\end{document}